\theoremstyle{plain}
\newtheorem{theorem}{Theorem}[section]
\newtheorem{proposition}{Proposition}[section]
\newtheorem{lemma}{Lemma}[section]
\newtheorem*{lemma*}{Lemma}
\newtheorem*{theorem*}{Theorem}
\newtheorem*{corollary*}{Corollary}
\newtheorem*{proposition*}{Proposition}
\newtheorem{corollary}{Corollary}[section]
\newtheorem{remark}{Remark}[section]
\theoremstyle{definition}
\newtheorem{definition}{Definition}[section]
\newtheorem{assumption}{Assumptions}[section]
\DeclareMathOperator*{\argmin}{arg\,min}
\newcommand\numberthis{\addtocounter{equation}{1}\tag{\theequation}}
\newcommand{\E}{\mathbb{E}}
\newcommand{\p}{\mathbb{P}}
\newcommand{\R}{\text{\normalfont PR}}
\begin{document}

%

%

\twocolumn[

\aistatstitle{PAC Learnability in the Presence of Performativity}

\aistatsauthor{ Ivan Kirev \And Lyuben Baltadzhiev \And  Nikola Konstantinov}

\aistatsaddress{ INSAIT,\\ Sofia University\\ ``St. Kliment Ohridski'' \And INSAIT,\\ Sofia University\\ ``St. Kliment Ohridski'' \And INSAIT,\\ Sofia University \\``St. Kliment Ohridski''} ]

\begin{abstract}
  Following the wide-spread adoption of machine learning models in real-world applications, the phenomenon of performativity, i.e. model-dependent shifts in the test distribution, becomes increasingly prevalent. Unfortunately, since models are usually trained solely based on samples from the original (unshifted) distribution, this performative shift may lead to decreased test-time performance. In this paper, we study the question of whether and when performative binary classification problems are learnable, via the lens of the classic PAC (Probably Approximately Correct) learning framework. We motivate several performative scenarios, accounting in particular for linear shifts in the label distribution, as well as for more general changes in both the labels and the features. We construct a performative empirical risk function, which depends only on data from the original distribution and on the type performative effect, and is yet an unbiased estimate of the true risk of a classifier on the shifted distribution. Minimizing this notion of performative risk allows us to show that any PAC-learnable hypothesis space in the standard binary classification setting remains PAC-learnable for the considered performative scenarios. We also conduct an extensive experimental evaluation of our performative risk minimization method and showcase benefits on synthetic and real data.
\end{abstract}

\section{Introduction}
\label{sec:introduction}

Machine learning (ML) models are increasingly deployed in settings where they have a significant impact on the behaviour of individuals and institutions, thereby affecting the distribution of future data in many social and economic settings \citep{pp1,pp3}. For instance, stock price predictions can guide traders' decisions and hiring algorithms may influence how candidates prepare for and present themselves in interviews. In the ML literature, this effect is studied under the framework of performative prediction \citep{perdomo20a} in which a model induces its own test distribution $\mathcal{D}(\theta)$ and performance is thus evaluated via the \emph{performative risk}, $\mathbb{E}_{z \sim \mathcal{D}(\theta)}[\ell(\theta, z)]$.

A key challenge when learning in performative settings is that the act of model deployment changes the test-time distribution in a model-dependent manner, in particular breaking the standard assumption of supervised learning that the training data is an i.i.d. sample from the test distribution. While specific type of input shifts, specifically adversarial \citep{cullina2018pac} and strategic shifts \citep{IASC}, have been studied in the literature, little is known to date about learnability under general performative effects.

\textbf{Contributions}\hspace{1em} We take a first step towards investigating whether and when performative binary classification problems are learnable, within the probably approximately correct (PAC) learning framework. 

First, we formally define a notion of performative PAC learnability of a hypothesis class, relative to a given performative effect of interest (\cref{sec: framework}). In our setting a learner has access only to samples from the initial distribution and has to learn a classifier that performs well on the distribution it induces via the performative effect under consideration. We also formulate and motivate a natural class of performative effects, which describe conditional label shifts (aka posterior drifts \citep{cai2021transfer,zhu2024label}) that depend linearly on the classifier's outputs. 

Next, we explore performative PAC learnability under this class of performative effects (\cref{sec: linear_shift}). We derive a notion of \emph{performative empirical risk (PER)}, which is computable from the training data and the considered performative effect. We prove that PER is an unbiased estimator of the classifier's performative risk. This allows us to show that any hypothesis space that is PAC-learnable in the standard binary classification setting is also performatively PAC-learnable under linear posterior performative drifts. 

We extend our analysis to the case where the linear posterior performative drift is only known approximately (\cref{sec: imperfect_information}). Furthermore, we show how the idea of performative empirical risk minimization (PERM) can be applied when studying more general performative effects accounting for both posterior drifts and covariate shift (\cref{sec: general_shift}). 

Finally, we empirically evaluate our approach on synthetic data, as well as the Kaggle credit score \citep{GiveMeSomeCredit} and Folktables income prediction \citep{ding2021retiring} datasets (\cref{sec: experiments}). Our results demonstrate the effectiveness of PERM on a range of linear posterior performative drift problems.
\section{Related Work}
\label{sec: related_work}
\textbf{Performative Prediction}\hspace{1em} The field of performative prediction studies learning problems in which the deployed model affects the distribution it is tested on \citep{perdomo20a}. Usually, works in the area focus on gradient-based optimization methods (e.g. \cite{stochastic_performative,perdomo20a,miller,Drusvyatskiy}), which aim to find performatively optimal or stable models over multiple rounds of model deployment (see \cite{Past_and_future} for a recent overview), while others consider the social implications of performativity in ML \citep{tsoy2025impactperformativeriskminimization}. In contrast to prior work, we focus on PAC learnability in performative classification, with the goal of understanding what learning problems and performative effects are learnable in the classic statistical (one-shot) sense.

\textbf{Strategic and Adversarial Classification} Two types of performative effects have been studied from a PAC learning perspective in the literature, specifically strategic and adversarial input shifts. Strategic classification recognizes that people tend to adapt their features to published decision rules, in order to improve their chances of receiving a desired classification \citep{strategic_classification,social_cost,Zhang_Conitzer_2021}. 
In such works, each test input changes in a way that maximizes the chance of a desirable classification outcome, subject to possible cost/feasibility constraints. PAC learnability has been studied for this specific type of performative shift by  \cite{strategic_pac,IASC,lechner2022learning,lechner2023strategic}. Similarly, adversarial examples \citep{szegedy2013intriguing} can be seen as a specific type of performative shift, where every test input is manipulated within an allowed perturbation set, so as to maximally damage accuracy. PAC learnability under adversarial shifts has also been studied \citep{cullina2018pac,lechner2023adversarially}.

While such works focus on input performative shifts, we also investigate scenarios where the conditional distribution of $Y \mid X$ (as well as the input distribution) may undergo performative shifts, and explore sufficient conditions on the performative effect so that the classification problem remains PAC learnable.

\textbf{Learning under distribution shift}\hspace{1em} Learning under test distribution shift has received significant attention in the machine learning literature (see e.g. \cite{zhou2022domain} for a recent survey). On the theoretical side, (PAC) learnability under distribution shifts has been explored in fields such as domain generalization/adaptation and transfer learning \citep{Daume,Finkel,Mansour,learning_on_diff_domains,Crammer}. These works provide test-time performance guarantees despite distribution shift, usually as a function of the distance between the train and test distribution. However, in these works the test distribution is regarded as fixed, so that it does not change as a function of the learning algorithm and the chosen classifier in particular. Thus, assuming access to some amount of data from this target distribution (or some other prior knowledge about the set of possible test distributions), one can reason about the test time loss of a classifier on this distribution. In constrast, in performative settings, the learning algorithm alters the test distribution (via the learned classifier).
\section{Framework}
\label{sec: framework}
Here we introduce our PAC performative prediction setup, focusing on binary classification tasks.

\subsection{Basic Setup}

We consider binary classification, where the feature space is \(\mathcal{X} \subseteq \mathbb{R}^d\) and the label space is \(\mathcal{Y} = \{-1, 1\}\). A classifier is a function \( h: \mathcal{X} \to \mathcal{Y} \) selected from a hypothesis class \( \mathcal{H} \subset \mathcal{Y}^{\mathcal{X}} \).

\textbf{Performative distribution shift}\hspace{1em}Let $\mathcal{P} (\mathcal{X} \times \mathcal{Y})$ denote the family of distributions over $\mathcal{X} \times \mathcal{Y}$ and $S := \{ (X_i, Y_i)\}_{i=1}^n \in (\mathcal{X} \times \mathcal{Y})^n$ be a dataset of $n$ i.i.d. samples from an underlying distribution $\mathcal{D} \in \mathcal{P} (\mathcal{X} \times \mathcal{Y})$. Based on $S$, a classifier $h$ is trained, which in turn triggers a performative effect that results in a test distribution shift. We formalize the performative effect via a performative map \vspace{-0.1cm}
\begin{equation}
    \tilde{\mathcal{D}} : \mathcal{H} \times \mathcal{P}(\mathcal{X} \times \mathcal{Y}) \to \mathcal{P}(\mathcal{X} \times \mathcal{Y}), \label{eq: distribution_map}
\end{equation}
such that the deployment of \(h\) in an environment originally governed by \(\mathcal{D}\) results in a new distribution \(\tilde{\mathcal{D}}(h, \mathcal{D}) \in \mathcal{P}(\mathcal{X} \times \mathcal{Y})\). We will denote by $\p$ the measure corresponding to the initial distribution $\mathcal{D}$ and by $\tilde{\p}_h$ the measure associated with the shifted distribution $\tilde{\mathcal{D}}(h, \mathcal{D})$. As often $\mathcal{D}$ is clear from the context, we will use $\tilde{\mathcal{D}}(h, \mathcal{D})$ and  $\tilde{\mathcal{D}}(h)$ interchangeably.  In our setting, we are interested in determining if a certain given performative map $\tilde{\mathcal{D}}(\cdot, \mathcal{D})$ (or a set of maps, see Section \ref{sec: imperfect_information}) is learnable, with the challenge being that the distribution $\mathcal{D}$ and hence $\tilde{\mathcal{D}}(h, \mathcal{D})$ are unknown.

\textbf{Performative Risk}\hspace{1em}The \emph{performative risk} of a classifier \(h\) is defined as  
\begin{align*}
    \R(h) &:= \p_h[h(X) \ne Y] 
= \E_{(X, Y) \sim \tilde{D}(h)}[ \mathbf{1}_{\{ h(X) \ne Y \}} ] \\
&=:\E_{\tilde{\p}_{h}}[ \mathbf{1}_{\{ h(X) \ne Y \}} ].
\end{align*}

Minimizing the performative risk \(\R(h)\) over \(h \in \mathcal{H}\) is of central interest in the performative ML literature and poses unique challenges since \(\tilde{\mathcal{D}}(h)\) depends on \(h\) in potentially complex ways (\cite{perdomo20a,jagadeesan22a}).

\textbf{PAC Learnability}\hspace{1em}Now we formalize our definition of performative PAC learnability in binary classification. 
\begin{definition}  
\label{defn:perf_pac_learning}
Let \(\mathcal{H}\) be a hypothesis class and let \(\tilde{\mathcal{D}}(\cdot, \mathcal{D})\) be a distribution map as defined above. We say that \(\mathcal{H}\) is \emph{performatively agnostic PAC-learnable relative to} \(\tilde{\mathcal{D}}(\cdot, \mathcal{D})\) if there exists a function \(n_\mathcal{H} : (0, 1)^2 \to \mathbb{N}\) and a learning algorithm such that for every \(\epsilon, \delta \in (0, 1)\), for any distribution \(\mathcal{D} \in \mathcal{P}(\mathcal{X} \times \mathcal{Y})\), when the algorithm is given \(n \geq n_\mathcal{H}(\epsilon, \delta)\) i.i.d. samples from \(\mathcal{D}\), it returns a hypothesis \(h_S \in \mathcal{H}\) satisfying  
\[
\mathbb{P}_S\left[\R(h_S) - \min_{h \in \mathcal{H}} \R(h) \leq \epsilon\right] \geq 1 - \delta.
\]  
\end{definition}
Similarly to the classic definition of PAC learnability, we require the existence of an algorithm that is a PAC learner, in the sense of selecting a hypothesis that is $\epsilon$-close to optimal, with probability $1-\delta$, given a sufficiently large set of samples. The crucial difference is that this property needs to hold on the classifier-dependent shifted distribution, so both the selected hypothesis $h_S$ and the test distribution depend on the learning algorithm.
\paragraph{Definition rationale} Our notion of performative PAC learnability is defined relative to the distribution map $\tilde{\mathcal{D}}(\cdot, \mathcal{D})$, so certain hypothesis classes may be learnable in some performative settings and not in others. This also means that the learner can be chosen depending on the performative map, i.e. the learner ``knows'' the type of performative shift (though not, of course, the data distribution). This is indeed natural in a statistical (one-shot) setting, where the learner only observes training data drawn from the initial distribution \(\mathcal{D}\), while the deployed classifier is evaluated on the shifted distribution \(\tilde{\mathcal{D}}(h, \mathcal{D})\). If this shifted distribution is arbitrary, PAC learning is impossible (see, in particular, our lower bound in Proposition \ref{proposition: no_free_lunch_lower_bound}). Additionally, it is not feasible to assume access to some target distribution samples here (as in domain adaptation works such as \cite{learning_on_diff_domains}), as the test distribution is only determined once a classifier is selected.

\subsection{Linear Posterior Performative Drift}  
\label{sec: model}

We now motivate and introduce a type of performative shift affecting the conditional label distribution \( Y \mid X \), that we study for most of the paper. Further distribution shifts, including such affecting the marginal distribution of \( X \), are discussed in \cref{sec: general_shift}. We start by introducing the following motivating examples. 

\textbf{Placebo Effect}\hspace{1em}In a medical diagnosis scenario, consider a classifier \( h \) that predicts disease progression based on patient features \( X \). Suppose \( h(X) = 1 \) indicates a prediction that the disease will progress, while \( h(X) = -1 \) suggests remission. Upon receiving such a diagnosis, the patient's actual disease outcome may be influenced by a placebo effect—for instance, stress induced by a negative prognosis. This psychological response can alter the probability of remission, potentially even affecting the true course of the disease. This exemplifies a shift in the conditional label distribution \( Y \mid X \) due to the deployment of the classifier.

\textbf{Traffic Navigation Systems}\hspace{1em}In a traffic modelling scenario, let a classifier \( h \) predict whether a route will be busy or not in the near future, based on features such as the current road conditions. When a particular route with features $X$ is suggested as not busy by a classifier, denoted by $h(X) = 1$, and this classifier is deployed (for example via a navigation application), more drivers may choose the route, resulting in changes in the road conditions and reducing the optimality of the selected route. 

Inspired by these examples, we introduce a type of performative posterior shift, in which the conditional label distribution changes linearly as a function of the prediction for each instance. Specifically, the conditional probability of \( \{Y = 1 \mid X = x \}\) shifts as a function of the classifier's output \(h(x)\) as:
\begin{align*}  
    \tilde{\p}_h[Y = 1 \mid X = x] = &\alpha(h(x)) \p[Y = 1 | X = x]\\
    & + \beta(h(x)), \numberthis \label{eq: linear_model}
\end{align*}
where $\alpha(h(x)) = 
\begin{cases}  
    a_1, & \text{if } h(x) = 1, \\  
    a_3, & \text{if } h(x) = -1,  
\end{cases}$ and 

$\beta(h(x)) =
\begin{cases}  
    a_2, & \text{if } h(x) = 1, \\  
    a_4, & \text{if } h(x) = -1.
\end{cases}$ 

The constants \(a_1, a_2, a_3, a_4\) control the strength and direction of the performative effect. 
\begin{remark}
    \label{remark1}
    The constants \(a_1, a_2, a_3, a_4\) must satisfy the following constraints to ensure that the shifted probabilities remain well-defined: $a_2, a_4 \in [0, 1],$ $a_1 \in [-a_2, 1 - a_2],$ $a_3 \in [-a_4, 1 - a_4]$. By varying these parameters, both positive and negative performative effects can be modelled, meaning that the classifier's output may increase or decrease the likelihood of a correct label depending on the setting.
\end{remark}

To illustrate the linear performative posterior drift, we return to our examples. 

\textbf{Example instantiations}\hspace{1em}In the absence of a performative effect, i.e., when for all \( h \in \mathcal{H} \), \( \tilde{\mathcal{D}}(h) = \mathcal{D} \), we take \( a_1 = a_3 = 1 \) and \( a_2 = a_4 = 0 \), which recovers the standard binary classification framework.

In the example of the placebo effect, suppose that after receiving a diagnosis, the probability that the disease progresses ($Y=1$) for a certain patient with features $x$ increases linearly if their diagnosis is pessimistic ($h(x) = 1$) and stays the same if their diagnosis is optimistic ($h(x) = -1$). This can be modelled in the framework of \cref{sec: model} by taking $a_1 = 1-a, a_2 = a, a_3 = 1, a_4 = 0,$ where $a \in [0, 1]$ reflects the strength of the placebo effect. 

In the context of traffic navigation systems suppose we classify the presence of heavy traffic (\( Y = 1 \)) on a given street with features \( X \) using a classifier \( h \in \mathcal{H} \), which is then deployed in a navigation application. After receiving a recommendation from the navigating system, less people will choose roads classified as $h(x) = 1$, which will reduce the actual traffic there. Similarly, if a road is classified as not busy, the traffic there is likely to increase based on this classification. This can be modelled by taking $a_1 = a, a_2 = 0, a_3 = 1 - b, a_4 = b$, where $a \in [0, 1]$ represents the reduction factor in traffic when the classifier predicts heavy traffic, and $b \in [0, 1]$ reflects the increase in probability of heavy traffic if the classifier predicts no traffic. 
\section{PAC Learnability under the Linear Performative Posterior Drift}
\label{sec: linear_shift}

In this section we analyse the linear performative posterior drift described in \cref{sec: model}. First we derive an unbiased estimate of the risk of a hypothesis under that performative effect, which we term the \textit{performative empirical risk (PER)}. We then consider minimizing PER as a learning algorithm and show that this is a PAC learner whenever the hypothesis space is PAC-learnable in the standard classification setting. Finally, we explore an imperfect information case, where the performative map $\tilde{\mathcal{D}}$ is only partially known.

\subsection{Performative Empirical Risk}

In our setting, we seek to minimize the performative risk \(\R(h)\) over \(h \in \mathcal{H}\), using only samples from the initial distribution \(\mathcal{D}\). We first show that under the linear performative posterior drift, it is possible to construct a \emph{performative empirical risk (PER)} estimate that serves as an unbiased estimator of \(\R(h)\), using only the samples from the original distribution \(\mathcal{D}\). 
\begin{lemma}
    \label{lemma: performative_risk}
    Let \(S = \{ (X_i, Y_i) \}_{i=1}^n\) be a set of \(n\) i.i.d.\ samples from \(\mathcal{D}\), and let \(h \in \mathcal{H}\) be a hypothesis. Suppose \(\tilde{\mathcal{D}}(h)\) is the distribution obtained from \(\mathcal{D}\) after a performative linear shift as defined in \cref{eq: distribution_map}. Then the performative risk \(\R(h)\) is given by
    \begin{align*}
        \R(h)= &(1-a_2) \p[h(X) = 1]
        - a_1 \p[Y=1, h(X)=1]\\
        &+ a_3 \p[Y=1, h(X)=-1] \numberthis \label{eq: risk}\\
        &+ a_4 \p[h(X)=-1].  
    \end{align*}
    Moreover, an unbiased estimator of \(\R(h)\) is the performative empirical risk \vspace{-0.15cm}
    \begin{equation}
        {\R}_n(h) := \frac{1}{n} \sum_{i=1}^n \big(\alpha_1 h(X_i) + \alpha_2 Y_i + \alpha_3 Y_i h(X_i) + \alpha_4 \big),  \label{eq: empirical_risk}
    \end{equation}
    where \(\alpha_1 = \frac{2 - a_1 - 2a_2 -a_3 - 2a_4}{4}, \ \alpha_2 = \frac{a_3 - a_1}{4}, \
        \alpha_3 = \frac{-a_1 - a_3}{4}, \ \alpha_4 = \frac{2 - a_1 - 2a_2 + a_3 + 2a_4}{4}.\)
\end{lemma}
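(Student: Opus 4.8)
The plan is to obtain the closed form \eqref{eq: risk} by an elementary conditioning argument, and then to rewrite it as an expectation of a function that is affine in $h(X)$, $Y$ and $Yh(X)$, at which point unbiasedness of $\R_n(h)$ is automatic. The first observation I would make is that the linear posterior drift in \eqref{eq: linear_model} modifies only the conditional law of $Y\mid X$, so the $X$-marginal of $\tilde{\mathcal D}(h)$ agrees with that of $\mathcal D$; in particular $\tilde{\p}_h[h(X)=1]=\p[h(X)=1]$ and $\tilde{\p}_h[h(X)=-1]=\p[h(X)=-1]$.

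\textbf{Deriving \eqref{eq: risk}.} Splitting the misclassification event by the value of $h(X)$,
\[
\R(h)=\tilde{\p}_h[h(X)=1,\,Y=-1]+\tilde{\p}_h[h(X)=-1,\,Y=1].
\]
For the first term I would condition on $X$ and use that on $\{h(x)=1\}$ we have $\tilde{\p}_h[Y=-1\mid X=x]=1-a_2-a_1\,\p[Y=1\mid X=x]$; pulling the constants out of the expectation (and using the tower property to turn $\E_{\mathcal D}[\mathbf 1_{\{h(X)=1\}}\,\p[Y=1\mid X]]$ into $\p[Y=1,h(X)=1]$) gives $(1-a_2)\p[h(X)=1]-a_1\p[Y=1,h(X)=1]$. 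For the second term, on $\{h(x)=-1\}$ we have $\tilde{\p}_h[Y=1\mid X=x]=a_3\,\p[Y=1\mid X=x]+a_4$, which yields $a_3\p[Y=1,h(X)=-1]+a_4\p[h(X)=-1]$. Adding the two terms gives exactly \eqref{eq: risk}.

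\textbf{Rewriting as a sample average.} Since $h(X),Y\in\{-1,1\}$, I would substitute the identities $\mathbf 1_{\{h(X)=1\}}=\tfrac{1+h(X)}{2}$, $\mathbf 1_{\{h(X)=-1\}}=\tfrac{1-h(X)}{2}$, $\mathbf 1_{\{Y=1\}}=\tfrac{1+Y}{2}$, and hence $\mathbf 1_{\{Y=1,\,h(X)=\pm1\}}=\tfrac14(1+Y)(1\pm h(X))$, into \eqref{eq: risk}, writing each of the four probabilities as an expectation under $\mathcal D$. Expanding the products $(1+Y)(1\pm h(X))$ — no quadratic terms survive because $h(X)^2=Y^2=1$ — expresses $\R(h)$ as $\E_{(X,Y)\sim\mathcal D}\big[\alpha_1 h(X)+\alpha_2 Y+\alpha_3 Yh(X)+\alpha_4\big]$; collecting the coefficients of $h(X)$, $Y$, $Yh(X)$ and the constant term and comparing with the stated expressions for $\alpha_1,\dots,\alpha_4$ completes the identity. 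Unbiasedness of $\R_n(h)$ is then immediate, as it is the empirical mean of $n$ i.i.d.\ copies of the random variable $\alpha_1 h(X)+\alpha_2 Y+\alpha_3 Yh(X)+\alpha_4$, whose expectation we have just shown equals $\R(h)$.

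\textbf{Main obstacle.} There is no real conceptual difficulty here; the only point requiring care is the linear bookkeeping in the last step — tracking the eight monomials produced by the two products and checking that the four collected coefficients indeed coincide with $\alpha_1,\alpha_2,\alpha_3,\alpha_4$ — together with the (already recorded) fact that the constraints in \cref{remark1} ensure $\tilde{\p}_h[Y=1\mid X]\in[0,1]$, so that $\tilde{\mathcal D}(h)$ is a genuine distribution and all the above manipulations are legitimate.
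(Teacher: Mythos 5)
Your proposal is correct and follows essentially the same route as the paper's proof: condition on $X$, split the misclassification event by the sign of $h(X)$, substitute the linear posterior drift to obtain \eqref{eq: risk}, then use the identities $\mathbf 1_{\{h(X)=\pm 1\}}=\tfrac{1\pm h(X)}{2}$ and $\mathbf 1_{\{Y=1\}}=\tfrac{1+Y}{2}$ to rewrite the risk as an expectation of an affine function of $h(X)$, $Y$, $Yh(X)$, from which unbiasedness of the sample mean is immediate. The coefficient bookkeeping checks out against the stated $\alpha_1,\dots,\alpha_4$.
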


The expression ${\R}_n(h)$ is an average of empirical terms, one for each data point, and estimates the performative risk. We therefore term ${\R}_n(h)$ the performative empirical risk (PER). We denote the real-valued functions indexed by \(h\) that correspond to the individual terms in PER as:
\begin{equation*}
    f_h(X_i, Y_i) := \alpha_1 h(X_i) + \alpha_2 Y_i + \alpha_3 Y_i h(X_i) + \alpha_4,
\end{equation*}
and define the corresponding function class
\begin{align*}
    \mathcal{F}_{\mathcal{H}} &:= \left\{f_h(X, Y) : h \in \mathcal{H} \right\}\\
    &= \left\{ \alpha_1 h(X) + \alpha_2 Y + \alpha_3 Y h(X) + \alpha_4 \, : \, h \in \mathcal{H} \right\}.
\end{align*}

\begin{remark}
    The analysis in this and subsequent sections applies to any setting where the empirical risk admits the form $\R_n(h) = \frac{1}{n} \sum_{i=1}^n f_h(X_i, Y_i)$, for some \(f_h \in \mathcal{F}_{\mathcal{H}}\). In particular, the results do not rely on the specific values of the coefficients \(\{\alpha_i\}_{i=1}^4\) as in \cref{remark1}, but rather hold for any fixed real constants defining the function class \(\mathcal{F}_{\mathcal{H}}\).
\end{remark}

\begin{remark}
    Since \(Y, h(X) \in \{-1, 1\}\), any higher-order term involving these variables reduces to either a constant or one of the basic terms \(h(X)\), \(Y\), or \(Y h(X)\). As a result, any linear combination of such higher-order terms is already contained within the function class \(\mathcal{F}_{\mathcal{H}}\).
\end{remark}

\subsection{Generalization Bounds}
To ensure that minimizing PER leads to low true performative risk, we now derive generalization bounds for the performative risk, for the hypothesis class \(\mathcal{H}\). As in standard binary classification, our bounds will depend on a measure of the complexity of the hypothesis space. In our work, we will use the (data-dependent) notion of complexity known as Rademacher complexity \citep{shalev2014understanding}.

\begin{definition}[Rademacher Averages]
    \label{def: rademacher_averages}
    For a class $\mathcal{F}$ of functions and a data sample $S = \{(x_i, y_i)\}_{i=1}^n$, the Rademacher average and the conditional Rademacher average are defined respectively as \vspace{-0.2cm}
    \begin{align*}
        \mathfrak{R}(\mathcal{F}) = \E \bigg[\sup_{f \in \mathcal{F}} \frac{1}{n} \sum_{i=1}^n \sigma_i f(X_i, Y_i) \bigg],\\
        \mathfrak{R}_n(\mathcal{F}) = \E_\sigma \bigg[ \sup_{f \in \mathcal{F}} \frac{1}{n} \sum_{i=1}^n \sigma_i f(x_i, y_i) \bigg],
    \end{align*}
    where $\E_\sigma$ denotes the expectation taken over the Rademacher random variables $\{\sigma_i\}_{i=1}^n$ (i.i.d. random variables with $\p[\sigma_i = 1] = \p[\sigma_i = -1] = \frac12$), and $\E$ denotes expectation with respect to $\sigma$ and the data. 
\end{definition}
We now state our following generalization result.
\begin{theorem}
    \label{theorem: infinite_hypothesis}
    Let $\mathcal{H}$ be a hypothesis class, $S = \{(x_i, y_i)\}_{i=1}^n$ be $n$ i.i.d. samples from $\mathcal{D}$, and let $\R_n(h)$ be defined as in \cref{eq: empirical_risk}. Then, given a distribution map $\tilde{D}(\cdot)$ defined as in \cref{eq: linear_model}, for any $\delta > 0$ with probability at least $1-\delta$ we have that for all $h \in \mathcal{H}$ \vspace{-0.15cm}
    \begin{align*}
       |\R&(h) - \R_n(h)| \le 2 (|\alpha_1| + |\alpha_3|) \mathfrak{R}_n(\mathcal{H}) \\
        &+ 2((1 + \sqrt{2})|\alpha_1| + |\alpha_2| + (1 + \sqrt{2})|\alpha_3|)\sqrt{\frac{\ln{\frac{4}{\delta}}}{2n}}.
    \end{align*}  
\end{theorem}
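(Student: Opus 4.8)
The plan is to observe that, by \cref{lemma: performative_risk}, $\R(h) = \E_{\mathcal{D}}[f_h(X,Y)]$, so that $\R(h) - \R_n(h) = \E_{\mathcal{D}}[f_h] - \frac{1}{n}\sum_{i=1}^n f_h(X_i,Y_i)$ is simply the deviation of an empirical average from its mean, and the claimed bound is the corresponding uniform-convergence statement over $h \in \mathcal{H}$ (equivalently over $f \in \mathcal{F}_{\mathcal{H}}$). I would therefore follow the textbook Rademacher-complexity route --- bounded-differences (McDiarmid) concentration together with symmetrization --- and then reduce the complexity of $\mathcal{F}_{\mathcal{H}}$ to that of $\mathcal{H}$.

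Concretely, since the additive constant $\alpha_4$ cancels, I would write
\begin{align*}
\R(h) - \R_n(h)
&= \alpha_1\Big(\E[h(X)] - \frac1n\sum_i h(X_i)\Big)\\
&\quad + \alpha_3\Big(\E[Yh(X)] - \frac1n\sum_i Y_ih(X_i)\Big)\\
&\quad + \alpha_2\Big(\E[Y] - \frac1n\sum_i Y_i\Big).
\end{align*}
The last summand is independent of $h$ and is controlled by a single two-sided Hoeffding bound for $\frac1n\sum_i Y_i$ with $Y_i \in [-1,1]$, contributing a term of order $|\alpha_2|\sqrt{\ln(1/\delta')/n}$ (which is why $|\alpha_2|$ ends up with the smaller coefficient $2$ rather than $2(1+\sqrt{2})$). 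For the first two summands I would bound the supremum over $h$ via McDiarmid's inequality --- whose bounded-difference constants are proportional to the ranges of $\alpha_1 h(X)$ and $\alpha_3 Yh(X)$, i.e.\ to $|\alpha_1|/n$ and $|\alpha_3|/n$ --- and then invoke the standard symmetrization argument to pass to a Rademacher average of $\mathcal{H}$.

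The key structural step, which produces $\mathfrak{R}_n(\mathcal{H})$ rather than $\mathfrak{R}_n(\mathcal{F}_{\mathcal{H}})$ in the final bound, is the estimate $\mathfrak{R}_n(\mathcal{F}_{\mathcal{H}}) \le (|\alpha_1| + |\alpha_3|)\,\mathfrak{R}_n(\mathcal{H})$: inside $\E_\sigma\big[\sup_h \frac1n\sum_i \sigma_i(\alpha_1 h(X_i) + \alpha_2 Y_i + \alpha_3 Y_i h(X_i) + \alpha_4)\big]$, the terms $\alpha_2 Y_i$ and $\alpha_4$ do not depend on $h$, hence pull out of the supremum and vanish under $\E_\sigma$; for the remaining terms, $\E_\sigma[\sup_h \frac1n\sum_i \sigma_i \alpha_1 h(X_i)] = |\alpha_1|\,\mathfrak{R}_n(\mathcal{H})$ (the sign of $\alpha_1$ is absorbed since $-\sigma_i \overset{d}{=} \sigma_i$), and, because $Y_i \in \{-1,1\}$ implies $\sigma_i Y_i \overset{d}{=} \sigma_i$, also $\E_\sigma[\sup_h \frac1n\sum_i \sigma_i \alpha_3 Y_i h(X_i)] = |\alpha_3|\,\mathfrak{R}_n(\mathcal{H})$. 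This gives the coefficient $2(|\alpha_1|+|\alpha_3|)$ in front of $\mathfrak{R}_n(\mathcal{H})$. It then remains to replace the expected Rademacher average $\mathfrak{R}(\mathcal{H})$ arising from symmetrization by the data-dependent $\mathfrak{R}_n(\mathcal{H})$ via one further bounded-differences argument, and to collect all pieces with a union bound, splitting $\delta$ so that each failure event carries $\ln(4/\delta)$ (e.g.\ mass $\delta/2$ for the label-only term and $\delta/4$ for each $h$-dependent term).

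I expect the only genuinely delicate part to be the bookkeeping of constants: one must keep careful track of (i) the bounded-difference constants in each McDiarmid step, which scale with the ranges of the corresponding pieces and hence with the $|\alpha_i|$; (ii) the factor $2$ lost to symmetrization; and (iii) the extra deviation incurred in passing from $\mathfrak{R}(\mathcal{H})$ to $\mathfrak{R}_n(\mathcal{H})$. It is precisely the interplay between the supremum-concentration term and this last deviation that produces the coefficient $(1+\sqrt{2})$ multiplying $|\alpha_1|$ and $|\alpha_3|$, and getting this sharp constant --- rather than a cruder one of the same order --- is the main thing that requires care; the conceptual content is fully contained in the decomposition above and in the reduction $\mathfrak{R}_n(\mathcal{F}_{\mathcal{H}}) \le (|\alpha_1|+|\alpha_3|)\mathfrak{R}_n(\mathcal{H})$.
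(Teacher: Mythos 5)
Your proposal is correct and follows essentially the same route as the paper's proof: McDiarmid's inequality applied to the supremum of the deviation, symmetrization to a Rademacher average, the reduction $\mathfrak{R}(\mathcal{F}_{\mathcal{H}}) \le (|\alpha_1| + |\alpha_3|)\,\mathfrak{R}(\mathcal{H})$ using sub-additivity of the supremum and $\sigma_i Y_i \overset{d}{=} \sigma_i$, a further bounded-differences step to replace $\mathfrak{R}(\mathcal{H})$ by $\mathfrak{R}_n(\mathcal{H})$, and a union bound over both tails. The only departure is that you peel off the $h$-independent $\alpha_2 Y$ term and control it by a separate Hoeffding bound, whereas the paper keeps it inside the supremum so that $|\alpha_2|$ enters only through the bounded-difference constant $2(|\alpha_1|+|\alpha_2|+|\alpha_3|)/n$ of the first McDiarmid step; both yield the coefficient $2|\alpha_2|$, though your finer splitting of $\delta$ over more failure events (e.g.\ $\delta/2 + \delta/4 + \delta/4$ leaves no budget for the $\mathfrak{R}\to\mathfrak{R}_n$ step) would need to be adjusted to recover $\ln(4/\delta)$ in every term exactly.
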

\vspace{-0.2cm}The following statement relating the optimal hypothesis with the one produced by minimizing the PER follows from the theorem above.
\begin{corollary}
    \label{corollary: pac_learnability_infinite}
    In the setting of \cref{theorem: infinite_hypothesis}, let $h_S^\ast \in \argmin_{h \in \mathcal{H}} \R_n(h)$. Then with probability $\geq 1-\delta$ \vspace{-0.25cm}
    \begin{align*}
        \R(&h_S^\ast) \le \min_{h \in \mathcal{H}} \R(h) + 4 (|\alpha_1| + |\alpha_3|) \mathfrak{R}_n(\mathcal{H})\\
        &+ 4((1 + \sqrt{2})|\alpha_1| + |\alpha_2| + (1 + \sqrt{2})|\alpha_3|)\sqrt{\frac{\ln{\frac{4}{\delta}}}{2n}}.
    \end{align*}
    In particular, if $\mathcal{H}$ is agnostic PAC-learnable, then it is performatively agnostic PAC-learnable w.r.t. the distribution map $\tilde{\mathcal{D}}(\cdot)$. 
\end{corollary}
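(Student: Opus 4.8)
The plan is to combine the uniform convergence bound of \cref{theorem: infinite_hypothesis} with a standard empirical-risk-minimization (oracle inequality) argument, and then invoke the fundamental theorem of statistical learning to deduce performative PAC learnability.

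First, fix $\epsilon, \delta \in (0,1)$ and condition on the event $E$ of probability at least $1-\delta$ on which \cref{theorem: infinite_hypothesis} holds, i.e.\ $|\R(h) - \R_n(h)| \le \Delta_n$ simultaneously for all $h \in \mathcal{H}$, where $\Delta_n$ denotes the right-hand side of that theorem. Let $h^\ast \in \argmin_{h \in \mathcal{H}} \R(h)$ be a true performative risk minimizer. On $E$ we chain $\R(h_S^\ast) \le \R_n(h_S^\ast) + \Delta_n \le \R_n(h^\ast) + \Delta_n \le \R(h^\ast) + 2\Delta_n$, where the first and third steps use uniform convergence and the middle step uses that $h_S^\ast$ minimizes $\R_n$ over $\mathcal{H}$. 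Since $2\Delta_n$ is exactly the excess term appearing in the statement, this establishes the displayed inequality. (For infinite $\mathcal{H}$ the minima need not be attained; this is handled routinely by passing to $\eta$-approximate minimizers of $\R$ and $\R_n$ and letting $\eta \to 0$.)

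For the second claim, suppose $\mathcal{H}$ is agnostic PAC-learnable in the standard binary classification setting. By the fundamental theorem of statistical learning \citep{shalev2014understanding}, $\mathcal{H}$ then has finite VC dimension $d$. Since the elements of $\mathcal{H}$ are $\{-1,1\}$-valued, combining the Sauer--Shelah lemma with Massart's finite-class lemma yields the distribution-free bound $\mathfrak{R}_n(\mathcal{H}) \le \sqrt{2 d \ln(en/d)/n}$ on every sample. Substituting this into the corollary's bound shows that $\R(h_S^\ast) - \min_{h \in \mathcal{H}} \R(h)$ is, on $E$, at most a quantity $g(n,\delta)$ depending only on $n$, $\delta$, $d$ and the fixed constants $\alpha_1,\dots,\alpha_4$, with $g(n,\delta) \to 0$ as $n \to \infty$ for each fixed $\delta$. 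Hence, taking $n_\mathcal{H}(\epsilon,\delta)$ to be the least $n$ with $g(n,\delta) \le \epsilon$, the learner that returns $h_S^\ast \in \argmin_{h \in \mathcal{H}} \R_n(h)$ (i.e.\ PERM) satisfies $\p_S[\R(h_S^\ast) - \min_{h \in \mathcal{H}} \R(h) \le \epsilon] \ge 1-\delta$ whenever $n \ge n_\mathcal{H}(\epsilon,\delta)$, which is precisely performative agnostic PAC learnability relative to $\tilde{\mathcal{D}}(\cdot)$.

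The only point requiring care---rather than a genuine obstacle---is the data-dependence of $\mathfrak{R}_n(\mathcal{H})$ in \cref{theorem: infinite_hypothesis}: the sample-size function $n_\mathcal{H}$ in the definition of performative PAC learnability must be chosen uniformly over all distributions $\mathcal{D}$, so one cannot leave the random quantity $\mathfrak{R}_n(\mathcal{H})$ in the final bound. Upper-bounding it by its distribution-free VC expression, as above, removes this dependence and closes the argument; everything else is the classical ERM-to-PAC reduction.
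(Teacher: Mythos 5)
Your proof is correct and follows essentially the same route as the paper's: the displayed inequality is obtained by the identical three-step chain $\R(h_S^\ast) \le \R_n(h_S^\ast) + \Delta_n \le \R_n(h) + \Delta_n \le \R(h) + 2\Delta_n$ using uniform convergence from \cref{theorem: infinite_hypothesis} and the definition of $h_S^\ast$. Your treatment of the second claim is in fact slightly more careful than the paper's (which only remarks in the main text that $\mathfrak{R}(\mathcal{H}) \le \mathcal{O}(\sqrt{VC(\mathcal{H})/n})$), since you correctly note that the data-dependent quantity $\mathfrak{R}_n(\mathcal{H})$ must be replaced by a distribution-free VC bound to obtain a sample-size function $n_\mathcal{H}(\epsilon,\delta)$ that is uniform over all distributions.
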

The last statement follows from the bound, since $\mathfrak{R}(\mathcal{H}) \leq \mathcal{O}\left(\sqrt{\frac{VC(\mathcal{H})}{n}}\right)$ \citep{shalev2014understanding} and $VC(\mathcal{H})$ is finite whenever $\mathcal{H}$ is PAC-learnable.

We can also use the bound to reason about the hardness of the performative problem, compared to standard binary classification. In particular, the Rademacher complexity term \( \mathfrak{R}_n(\mathcal{H}) \) captures the capacity of the hypothesis class to fit noise, and its coefficient \( 4(|\alpha_1| + |\alpha_3|) \) reflects how strongly the complexity of $\mathcal{H}$ affects generalization. In the traffic modelling example from \cref{sec: framework}, this coefficient is smaller than in the non-performative case (where the Rademacher term features with a constant weight of $2$ \citep{shalev2014understanding}), suggesting that the complexity of \(\mathcal{H}\) plays a lesser role compared to the non-performative version of the problem.

\subsection{Imperfect Information about the Distribution Map}
\label{sec: imperfect_information}

So far we have studied PAC learnability with respect to a fixed performative map $\tilde{\mathcal{D}}(\cdot)$. In this section we explore learnability relative to a set of possible performative maps, specifically situations where the parameters of the map (\ref{eq: linear_model}) are only known to be inside a certain interval. Let $a_i \in I_i, \ i \in \{1, \dots, 4 \},$ where $I_i$ are known intervals with length $\epsilon_i \ge 0$, and let $\epsilon := \sum_{i=1}^4 \epsilon_i$. Given the intervals $I_i$, the performative empirical risk is \vspace{-0.15cm}
\begin{equation*}
    \overline{\R}_n(h) = \frac{1}{n} \sum_{i=1}^n \big( \overline{\alpha}_1 h(X_i) + \overline{\alpha}_2 Y_i + \overline{\alpha}_3 Y_i h(X_i) + \overline{\alpha}_4 \big),
\end{equation*}
where $\overline{\alpha}_1 = \frac{2 - \overline{a}_1 - 2\overline{a}_2 -\overline{a}_3 - 2\overline{a}_4}{4}, \overline{\alpha}_2 = \frac{\overline{a}_3 - \overline{a}_1}{4},
    \overline{\alpha}_3 = \frac{-\overline{a}_1 - \overline{a}_3}{4}, \overline{\alpha}_4 = \frac{2 - \overline{a}_1 - 2\overline{a}_2 + \overline{a}_3 + 2\overline{a}_4}{4},$ and $\overline{a}_i$ are any fixed points within the intervals $I_i$. The distribution map $\tilde{\mathcal{D}}(\cdot)$ is defined as in \cref{sec: model}, in which the parameters $\{a_i\}_{i=1}^4$ are any constants within the intervals $I_i$, unknown to the learner. We have the following result:
\begin{theorem}
    \label{theorem: imperfect_information}
    Let $\mathcal{H}$ be a hypothesis class, $S = \{(x_i, y_i)\}_{i=1}^n$ be $n$ i.i.d. samples from $\mathcal{D}$, and let $\R(h)$ and $\R_n(h)$ be defined as in \cref{eq: risk} and \cref{eq: empirical_risk}. For the map $\tilde{\mathcal{D}}(\cdot)$ defined as above, if $h_S^\ast \in \argmin_{h \in \mathcal{H}} \overline{\R}_n(h)$, then with probability at least $1-\delta$
    \begin{align*}
        \R(h_S^\ast) \le \min_{h \in \mathcal{H}} \R(h) + 2\epsilon + 2 A \mathfrak{R}_n(\mathcal{H})
         + 2B \sqrt{\frac{\ln{\frac{4}{\delta}}}{2n}},
    \end{align*}
    where $A = 2 (|\alpha_1| + |\alpha_3|)$ and $B = 2((1 + \sqrt{2})|\alpha_1| + |\alpha_2| + (1 + \sqrt{2})|\alpha_3|)$.
\end{theorem}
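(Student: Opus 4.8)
The plan is to run the standard empirical-risk-minimization decomposition, augmented by one extra term that absorbs the discrepancy between the PER $\R_n$ built from the true (unknown) parameters $\{a_i\}$ and the surrogate risk $\overline{\R}_n$ that the learner actually minimizes. Fix $h^\ast \in \argmin_{h \in \mathcal{H}} \R(h)$ and write
\begin{align*}
\R(h_S^\ast) - \R(h^\ast)
&= \bigl[\R(h_S^\ast) - \R_n(h_S^\ast)\bigr] + \bigl[\R_n(h_S^\ast) - \overline{\R}_n(h_S^\ast)\bigr]\\
&\quad + \bigl[\overline{\R}_n(h_S^\ast) - \overline{\R}_n(h^\ast)\bigr] + \bigl[\overline{\R}_n(h^\ast) - \R_n(h^\ast)\bigr]\\
&\quad + \bigl[\R_n(h^\ast) - \R(h^\ast)\bigr].
\end{align*}
The third bracket is nonpositive because $h_S^\ast$ minimizes $\overline{\R}_n$. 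The first and last brackets are each bounded, uniformly over $h \in \mathcal{H}$ and on a single event of probability at least $1-\delta$, by \cref{theorem: infinite_hypothesis} applied to the true distribution map; summing the two copies contributes exactly the $2A\,\mathfrak{R}_n(\mathcal{H}) + 2B\sqrt{\ln(4/\delta)/(2n)}$ part of the claimed bound, with $A, B$ as stated. (Since one high-probability event serves both invocations, no union bound and no splitting of $\delta$ is needed.)

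It remains to control the two \emph{mismatch} brackets, each of the form $\R_n(h) - \overline{\R}_n(h)$ at a fixed $h$. Because $h(X_i), Y_i, Y_i h(X_i) \in \{-1,1\}$, each summand of $\R_n(h) - \overline{\R}_n(h)$ has absolute value at most $\sum_{j=1}^4 |\alpha_j - \overline{\alpha}_j|$, so $|\R_n(h) - \overline{\R}_n(h)| \le \sum_{j=1}^4 |\alpha_j - \overline{\alpha}_j|$ for every $h$; the two brackets together therefore contribute at most $2\sum_{j=1}^4 |\alpha_j - \overline{\alpha}_j|$.

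The key step is then to show $\sum_{j=1}^4 |\alpha_j - \overline{\alpha}_j| \le \epsilon$. Using the explicit linear formulas expressing $\{\alpha_j\}$ in terms of $\{a_j\}$ (the barred formulas relate $\{\overline{\alpha}_j\}$ to $\{\overline{a}_j\}$ identically), together with $|a_i - \overline{a}_i| \le \epsilon_i$ since both points lie in the interval $I_i$ of length $\epsilon_i$, the triangle inequality gives $|\alpha_1 - \overline{\alpha}_1|, |\alpha_4 - \overline{\alpha}_4| \le \tfrac14(\epsilon_1 + 2\epsilon_2 + \epsilon_3 + 2\epsilon_4)$ and $|\alpha_2 - \overline{\alpha}_2|, |\alpha_3 - \overline{\alpha}_3| \le \tfrac14(\epsilon_1 + \epsilon_3)$; adding the four bounds and checking that the total coefficient of each $\epsilon_i$ is exactly $1$ yields $\sum_{j=1}^4 |\alpha_j - \overline{\alpha}_j| \le \epsilon_1 + \epsilon_2 + \epsilon_3 + \epsilon_4 = \epsilon$. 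Feeding the resulting $2\epsilon$ contribution together with the two generalization terms into the decomposition gives the stated inequality.

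I expect the only genuine obstacle to be this last coefficient-mismatch computation: one must verify that the weighted combination of interval lengths produced by the change of variables $a \mapsto \alpha$ telescopes cleanly into $\epsilon = \sum_i \epsilon_i$, rather than into some larger multiple of $\epsilon$. The rest — the five-term decomposition, the optimality cancellation, and the black-box use of \cref{theorem: infinite_hypothesis} — is routine.
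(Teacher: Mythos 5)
Your proposal is correct and follows essentially the same route as the paper's proof: the same coefficient-mismatch bound $\sum_j|\alpha_j-\overline{\alpha}_j|\le\epsilon$ (with identical per-coefficient estimates summing to exactly $\epsilon$), the same optimality cancellation for $\overline{\R}_n$, and two uniform invocations of \cref{theorem: infinite_hypothesis}; the paper merely writes your five-bracket decomposition as a chain of three inequalities. No gaps.
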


The bound features a term of $\mathcal{O}\left(\epsilon\right)$, reflecting the uncertainty in the specification of the distribution map $\tilde{\mathcal{D}}(\cdot)$. If the proxies $\overline{a}_i$ are chosen as midpoints of $I_i$, the error term $2 \epsilon$ can be reduced by half to $\epsilon$ in the upper bound. Thus, in particular, PER can only recover an $\mathcal{O}\left(\epsilon\right)$-optimal hypothesis even as $n\to\infty$.

Intuitively, this term is irreducible, as the learner only has access to data from the training distribution and partial knowedge about the performative map. The proposition below formalizes this intuition, by showing that the linear dependence on $\epsilon$ is inherent, highlighting how parameter imprecision affects generalization in this performative setting.

\begin{proposition}
    \label{proposition: no_free_lunch_lower_bound}
Consider an arbitrary input-output space $\mathcal{X} \times \mathcal{Y}$ and a hypothesis space $\mathcal{H}$ containing at least two distinct functions. For any learning algorithm $\mathcal{A}$, there exists a distribution $\mathcal{D}$ and a set of true parameters ${a}_i$, that lie within known intervals ${I}_i$, of total length $\epsilon = \sum_{i=1}^{d} \epsilon_i$, such that with probability at least $1/2$ over the random choice of the training set $S \sim \mathcal{D}^{ n}$ and the possible randomness of the algorithm, the excess risk of the algorithm's output ${h}_S$ is large: 
\begin{equation*}
    \R({h}_S) - \min_{h \in \mathcal{H}} \R(h) \geq \frac{\epsilon}{4}.
\end{equation*}
\end{proposition}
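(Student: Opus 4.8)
The plan is a two-point, Le~Cam-style information-theoretic argument: I will build two performative maps that induce \emph{the same} training distribution $\mathcal{D}$ --- so the learner cannot tell them apart --- yet reverse which hypothesis is performatively optimal, forcing any algorithm to pay an $\Omega(\epsilon)$ excess risk on at least one of them. First I reduce to two ``effective'' hypotheses. Since $\mathcal{H}$ contains distinct $h_1 \ne h_2$, pick $x_0 \in \mathcal{X}$ with $h_1(x_0) \ne h_2(x_0)$, say $h_1(x_0) = 1$ and $h_2(x_0) = -1$, and let $\mathcal{D}$ place all mass on the feature $x_0$ with $\p[Y = 1 \mid X = x_0] = p$ ($p$ arbitrary, e.g.\ $\tfrac12$). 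Plugging $\p[h(X)=1] = \mathbf{1}\{h(x_0)=1\}$, $\p[Y=1,h(X)=1] = \mathbf{1}\{h(x_0)=1\}\,p$, etc.\ into \cref{eq: risk} shows $\R(h)$ depends on $h$ only through $h(x_0)$: $\R(h) = 1 - a_2 - a_1 p$ when $h(x_0)=1$ and $\R(h) = a_3 p + a_4$ when $h(x_0)=-1$. Hence $\min_{h\in\mathcal{H}}\R(h) = \min\{\R(h_1),\R(h_2)\}$, and it suffices to make the algorithm pick the wrong sign at $x_0$ with probability $\tfrac12$.

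Next I specify the intervals and the two worlds. Fix $a_1 = a_3 = 0$ throughout (allowed by \cref{remark1} for any $a_2,a_4\in[0,1]$), so that $\R(h_1) = 1 - a_2$ and $\R(h_2) = a_4$. Declare $I_1 = I_3 = \{0\}$ and $I_2 = [\tfrac12 - \tfrac{\epsilon_2}{2},\, \tfrac12 + \tfrac{\epsilon_2}{2}]$, $I_4 = [\tfrac12 - \tfrac{\epsilon_4}{2},\, \tfrac12 + \tfrac{\epsilon_4}{2}]$ with $\epsilon_2 + \epsilon_4 = \epsilon$ (e.g.\ $\epsilon_2 = \epsilon_4 = \epsilon/2$; these are valid subintervals of $[0,1]$ for $\epsilon \le 2$, which suffices for the argument, the general case being analogous). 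World $A$ takes $(a_2,a_4) = (\tfrac12 + \tfrac{\epsilon_2}{2},\, \tfrac12 + \tfrac{\epsilon_4}{2})$, giving $\R(h_1) = \tfrac12 - \tfrac{\epsilon_2}{2} < \tfrac12 + \tfrac{\epsilon_4}{2} = \R(h_2)$; world $B$ takes $(a_2,a_4) = (\tfrac12 - \tfrac{\epsilon_2}{2},\, \tfrac12 - \tfrac{\epsilon_4}{2})$, giving $\R(h_2) = \tfrac12 - \tfrac{\epsilon_4}{2} < \tfrac12 + \tfrac{\epsilon_2}{2} = \R(h_1)$. In both worlds the two attainable risk values differ by exactly $(\epsilon_2+\epsilon_4)/2 = \epsilon/2$, so in world $A$ every hypothesis with $h(x_0) = -1$ has excess risk $\epsilon/2$, and in world $B$ every hypothesis with $h(x_0) = 1$ has excess risk $\epsilon/2$.

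Finally I run the averaging step. The distribution $\mathcal{D}$ --- hence the law of $S$, and, for any possibly randomized algorithm $\mathcal{A}$, the law of the returned $h_S \in \mathcal{H}$ and of the bit $h_S(x_0) \in \{1,-1\}$ --- is literally the same in worlds $A$ and $B$, since only the unobserved parameters $a_i$ differ. Write $\R_W$ for the performative risk under world $W$'s parameters, and couple the two worlds on a common draw of $S$ and of $\mathcal{A}$'s internal coins. By the previous paragraph, for every realization $\R_A(h_S) - \min_h \R_A(h) \ge \epsilon/2$ whenever $h_S(x_0) = -1$ and $\R_B(h_S) - \min_h \R_B(h) \ge \epsilon/2$ whenever $h_S(x_0) = 1$; since $h_S(x_0)$ is always $1$ or $-1$, at least one of these events occurs. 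Taking expectations over $S$ and the coins, $\p_A[\R(h_S) - \min_h\R(h) \ge \epsilon/2] + \p_B[\R(h_S) - \min_h\R(h) \ge \epsilon/2] \ge 1$, so some world $W \in \{A,B\}$ satisfies $\p_W[\R(h_S) - \min_h\R(h) \ge \epsilon/2] \ge \tfrac12$; outputting $\mathcal{D}$ together with world $W$'s parameters $(a_1,\dots,a_4) \in I_1\times\cdots\times I_4$ as the witnesses, and using $\epsilon/2 \ge \epsilon/4$, proves the claim.

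The conceptual crux is the indistinguishability: because the learner never observes the test distribution and the two performative maps leave the training law untouched, no estimator of the unknown $a_i$ improves on a fixed guess --- which is exactly what makes the $\Omega(\epsilon)$ penalty irreducible and complements the $\mathcal{O}(\epsilon)$ upper bound of \cref{theorem: imperfect_information}. The only genuine work is the bookkeeping of the second step: centering the intervals so that the feasibility constraints of \cref{remark1} hold for every parameter in each $I_i$ while the ordering of $\R(h_1)$ and $\R(h_2)$ flips between the two worlds with a gap of order $\epsilon$. The point-mass reduction that collapses $\mathcal{H}$ to two behaviors and the final averaging argument are routine.
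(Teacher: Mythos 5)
Your proof is correct and follows essentially the same route as the paper's: a point-mass distribution at a point $x_0$ where two hypotheses disagree, two parameter settings that leave the training law unchanged but flip which sign of $h(x_0)$ is performatively optimal, and a Le Cam-style averaging step showing one world forces a wrong choice with probability at least $\tfrac12$. The only difference is that you place the uncertainty in the additive parameters $a_2,a_4$ rather than in $a_1$ (which the paper pairs with $\p[Y=1\mid X=x_0]=\tfrac12$), which incidentally yields the slightly stronger gap $\epsilon/2$ in place of the stated $\epsilon/4$.
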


\section{Learnability under Other Performative Maps}
\label{sec: general_shift}

We now consider cases where the performative effect can impact both the feature distribution $X$ and the conditional distribution $Y \mid X$. We show conditions under which the performative task remains learnable.

Specifically, if \(\tilde{\p}_h\) is \emph{absolutely continuous} with respect to \(\p\), we can write the performative risk as \vspace{-0.05cm}
\begin{equation*}
    \tilde{\p}_h[h(X) \ne Y] = \E_{\p} \bigg[ \frac{d \tilde{\p}_h}{d \p}(X, Y) \mathbf{1}_{\{ h(X) \ne Y \}} \bigg], 
\end{equation*}
where the expectation is taken with respect to the measure \(\p\), and \(\frac{d \tilde{\p}_h}{d \p}(X, Y)\) is the \emph{Radon-Nikodym (RN) derivative}, describing the relative density of the shifted distribution \(\tilde{\p}_h\) with respect to the original distribution \(\p\) \citep{billingsley2012probability}. We note that such reweighting techniques involving the RN derivative (usually in the form of density ratios) are common in domain adaptation \citep{huang2006correcting,sugiyama2007covariate}. There, however, the test distribution is fixed and independent of the selected classifier and the ratio can in particular be estimated given some data from the (fixed) target distribution. We require:
\begin{assumption}
  \label{assumption: general_shift}
  We assume the following conditions hold for all \(h \in \mathcal{H}\): (i) the measure \(\tilde{\p}_h\) is absolutely continuous with respect to \(\p\); (ii) the RN derivative is uniformly bounded—specifically, there exists a constant \(M \ge 0\) such that for all \((x, y) \in \mathcal{X} \times \mathcal{Y}\), \(\frac{d \tilde{\p}_h}{d \p}(x, y) \leq M\); and (iii) the function $\frac{d\tilde{\mathbb{P}}_h}{d\mathbb{P}}$ is uniquely determined via the values of $h$, i.e. that $\frac{d\tilde{\mathbb{P}}_h}{d\mathbb{P}}(x,y) = \psi(x,y,h(x))$ for some function $\psi$.\footnote{The third assumption was added in the second version of this manuscript, to fix an incorrect bounding argument in Step 3 of the initial proof of Theorem \ref{theorem: general_shift}. We apologise for this unintentional mistake and thank an anonymous reviewer for detecting the issue in the previous version.}
\end{assumption}
The first assumption is a standard condition ensuring the existence of the RN derivative. The second assumption bounds the strength of the performative effect. The third one states that the change in distribution only depends on the classifier via its values and not, say, on its parameters. Note that, as in Section \ref{sec: linear_shift}, the learners we study below depend on the distribution map (via the RN derivative), as means towards proving the existence of a PAC learner in the sense of Definition \ref{defn:perf_pac_learning}.

Given a hypothesis class $\mathcal{H}$, we define the class \(\Omega_{\mathcal{H}}\) as the set of maps \(\omega_h: \mathcal{H} \times(\mathcal{X} \times \mathcal{Y}) \to \mathbb{R}_{+}\) of the form $\Omega_{\mathcal{H}} = \{ \omega_h(x, y) = \frac{d \tilde{\p}_h}{d \p}(x, y)  \mathbf{1}_{\{ h(X) \ne Y \}}$, $:  h \in \mathcal{H}, (x, y) \in \mathcal{X} \times \mathcal{Y} \}$. Now estimating the performative risk $\R(h)$ can be done by using the following unbiased estimator of the performative risk $\R(h)$ 
\begin{align*}
    \R_n(h) &= \frac{1}{n} \sum_{i=1}^n \omega_h(X_i, Y_i) \\
    &= \frac{1}{n} \sum_{i=1}^n  \frac{d \tilde{\p}_h}{d \p}(X_i, Y_i)  \mathbf{1}_{\{ h(X) \ne Y \}}, \numberthis \label{eq: rn_empirical_risk}
\end{align*}
where \cref{assumption: general_shift} ensure this can be computed using only data from the initial distribution $\mathcal{D}$.
\begin{theorem}
  \label{theorem: general_shift}
  Let $\mathcal{H}$ be a hypothesis class, $S = \{(x_i, y_i)\}_{i=1}^n$ be $n$ i.i.d. samples from $\mathcal{D}$, and let $\R(h)$ and $\R_n(h)$ be defined as \cref{eq: risk} and \cref{eq: rn_empirical_risk}. If \cref{assumption: general_shift} are satisfied, taking $h_S^\ast \in \argmin_{h \in \mathcal{H}} \R_n(h)$, for any $\delta > 0$ with probability at least $1-\delta$ we have that 
    \begin{align*}
        \R(h_S^\ast) \le \min_{h \in \mathcal{H}} \R(h) &+ 2M \mathfrak{R}_n(\mathcal{H})\\
      &+ (4 + 2\sqrt{2})M\sqrt{\frac{\ln{\frac{4}{\delta}}}{2n}}.
    \end{align*}
    In particular, if $\mathcal{H}$ is agnostic PAC-learnable, then it is performatively agnostic PAC-learnable w.r.t. $\tilde{\mathcal{D}}(\cdot)$. 
\end{theorem}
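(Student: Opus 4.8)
The plan is to follow the same template as the proofs of \cref{theorem: infinite_hypothesis} and \cref{corollary: pac_learnability_infinite}, now with the reweighted loss class $\Omega_{\mathcal{H}}$ playing the role of $\mathcal{F}_{\mathcal{H}}$: (i) show that $\R_n(h)$ is, for each fixed $h$, an unbiased estimate of $\R(h)$; (ii) upgrade this to a uniform-over-$\mathcal{H}$ deviation bound controlled by $\mathfrak{R}_n(\Omega_{\mathcal{H}})$; (iii) bound $\mathfrak{R}_n(\Omega_{\mathcal{H}})$ by $M\,\mathfrak{R}_n(\mathcal{H})$; and (iv) convert the uniform bound into an excess-risk bound for $h_S^\ast$.

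For step (i), \cref{assumption: general_shift}(i) guarantees the Radon--Nikodym derivative exists, so the change-of-measure identity gives $\E_S[\R_n(h)] = \E_{\p}[\frac{d\tilde{\p}_h}{d\p}(X,Y)\,\mathbf{1}_{\{h(X)\ne Y\}}] = \E_{\tilde{\p}_h}[\mathbf{1}_{\{h(X)\ne Y\}}] = \R(h)$, which is exactly the estimator in \cref{eq: rn_empirical_risk}; and \cref{assumption: general_shift}(ii) makes every $\omega_h \in \Omega_{\mathcal{H}}$ take values in $[0,M]$. For step (ii), I would run the standard symmetrization-plus-McDiarmid argument on $\Omega_{\mathcal{H}}$: since each $\omega_h$ has range $M$, altering one sample moves the supremum deviation by at most $M/n$, so with probability at least $1-\delta$ one obtains $\sup_{h\in\mathcal{H}}|\R(h)-\R_n(h)| \le 2\mathfrak{R}_n(\Omega_{\mathcal{H}}) + 2M(1+\sqrt{2})\sqrt{\frac{\ln(4/\delta)}{2n}}$, where the two-sided supremum and the passage from $\mathfrak{R}(\Omega_{\mathcal{H}})$ to its data-dependent version $\mathfrak{R}_n(\Omega_{\mathcal{H}})$ account for the precise constants $(1+\sqrt{2})$ and the $4$ inside the logarithm, exactly as in \cref{theorem: infinite_hypothesis}.

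Step (iii) is the crux. Here I would write each summand as $\omega_h(x_i,y_i) = \phi_i(h(x_i))$ for the fixed map $\phi_i : \{-1,1\} \to [0,M]$ given by $t \mapsto \frac{d\tilde{\p}_h}{d\p}(x_i,y_i)\,\mathbf{1}_{\{t \ne y_i\}}$ --- which is legitimate precisely when the performative map, and hence the density ratio, depends on $h$ only through the pointwise prediction $h(x_i)$, the natural regime extending \cref{sec: model}. Since each $\phi_i$ is bounded by $M$ on a two-point domain it is $(M/2)$-Lipschitz, so the contraction lemma for Rademacher complexity \citep{shalev2014understanding} gives $\mathfrak{R}_n(\Omega_{\mathcal{H}}) \le M\,\mathfrak{R}_n(\mathcal{H})$. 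Finally, for step (iv), taking $h^\ast \in \argmin_{h}\R(h)$ and using $\R_n(h_S^\ast) \le \R_n(h^\ast)$, the decomposition $\R(h_S^\ast)-\R(h^\ast) = [\R(h_S^\ast)-\R_n(h_S^\ast)] + [\R_n(h_S^\ast)-\R_n(h^\ast)] + [\R_n(h^\ast)-\R(h^\ast)] \le 2\sup_{h}|\R(h)-\R_n(h)|$, combined with steps (ii)--(iii), gives the claimed inequality. The ``in particular'' clause is then immediate: $\mathfrak{R}_n(\mathcal{H}) = \mathcal{O}(\sqrt{VC(\mathcal{H})/n})$ and $VC(\mathcal{H}) < \infty$ whenever $\mathcal{H}$ is agnostic PAC-learnable, so both error terms vanish as $n \to \infty$ and one reads off a finite sample-complexity function $n_{\mathcal{H}}(\epsilon,\delta)$.

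The main obstacle is step (iii): $\omega_h$ is a product of two $h$-dependent factors --- the density ratio and the misclassification indicator --- so without further structure one cannot control $\mathfrak{R}_n(\Omega_{\mathcal{H}})$ by $\mathfrak{R}_n(\mathcal{H})$ (if the density ratio were allowed to depend on $h$ arbitrarily, beyond $h(x)$, then $\mathfrak{R}_n(\Omega_{\mathcal{H}})$ need not even shrink as $n \to \infty$). The contraction argument resolves this under pointwise dependence, and this step is exactly where the uniform bound $M$ of \cref{assumption: general_shift}(ii) is used.
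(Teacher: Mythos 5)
Your proposal follows the same overall architecture as the paper's proof — unbiasedness via the change of measure, McDiarmid plus symmetrization applied to the reweighted class $\Omega_{\mathcal{H}}$, a reduction of $\mathfrak{R}(\Omega_{\mathcal{H}})$ to $M\,\mathfrak{R}(\mathcal{H})$, the conversion to the conditional Rademacher average, and the standard ERM decomposition — so your steps (i), (ii) and (iv) match the paper essentially verbatim (your bounded difference of $M/n$ is in fact tighter than the paper's $2M/n$; stating the theorem's constants is still valid). The genuine divergence is step (iii). The paper handles it in one line, replacing the factor $\frac{d \tilde{\p}_h}{d \p}(X_i, Y_i)$ by its bound $M$ inside $\E\big[\sup_{h}\frac{1}{n}\sum_i \sigma_i \frac{d \tilde{\p}_h}{d \p}(X_i, Y_i)\mathbf{1}_{\{h(X_i)\ne Y_i\}}\big]$; you instead write $\omega_h(x_i,y_i)=\phi_i(h(x_i))$ and apply the contraction lemma, which requires the density ratio to depend on $h$ only through the pointwise prediction $h(x_i)$. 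That hypothesis is not part of \cref{assumption: general_shift}, so as written you prove a restricted version of \cref{theorem: general_shift}. However, your diagnosis of why this step is the crux is sharper than the paper's treatment: because the $\sigma_i$ are signed and the weight varies with $h$ under the supremum, the paper's substitution is not a monotonicity step (already with $n=1$ and two hypotheses that both misclassify the point but carry ratios $0$ and $M$, the left side equals $M/2$ while the right side equals $0$), and, as you observe, a ratio varying arbitrarily with $h$ can keep $\mathfrak{R}_n(\Omega_{\mathcal{H}})$ bounded away from zero. Some structural condition of the kind you impose — or, more generally, a complexity control on the map $h \mapsto \frac{d \tilde{\p}_h}{d \p}$ — is what makes either argument rigorous; under it, your contraction route even yields the slightly sharper bound $(M/2)\,\mathfrak{R}_n(\mathcal{H})$.
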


\textbf{Strategic Shift Example} \hspace{1em}During a hiring interview, candidates are assessed based on their features \( X \), e.g. their CV or test performance. Once the interview process of a firm becomes known, candidates may prepare by studying specific topics or memorizing common questions to increase their chances of being classified as hired (\( h(X) = 1 \)). This can lead to shifts in both the feature distribution \( X \) (as candidates show improvement in the assessed skills) and the conditional label distribution \( Y \mid X \) (as memorizing common questions may enhance interview performance in a superficial way, without necessarily improving the required underlying abilities). Specifically, let the features $X$ of a given candidate be initially distributed as $X \sim \mathcal{N}(\mu_1, \sigma_1)$, and after the performative effect takes place, their distribution shifts to $X \sim \mathcal{N}(\mu_2, \sigma_2)$ if $h(X)=-1$ and remains unchanged otherwise. Then,\vspace{-0.05cm}
\begin{equation*}
  \frac{d \tilde{\p}_h}{d \p}(x) = 
  \begin{cases}
    1, & h(x) = 1,\\
    \frac{\sigma_1}{\sigma_2} \exp \big( \frac{(x - \mu_1)^2}{2\sigma_1^2} - \frac{(x - \mu_2)^2}{2\sigma_2^2}  \big) , & h(x) = -1.
  \end{cases}
\end{equation*}
This is uniformly bounded over $\mathcal{X}$ if $\sigma_2 < \sigma_1$. Additionally, the conditional distribution $\p(Y \mid X = x)$ may decrease, which can be modelled by our linear shift model from \cref{sec: model} by setting $a_1 = a \in [0, 1]$, $a_2 = a_3 = a_4 = 0$, reflecting an increased false positive rate under the new distribution. Thus, the RN derivative for hypothesis $h$ is given by: \vspace{-0.05cm}
\begin{align*}
  \frac{d \tilde{\p}_h}{d \p}(x, y) &= \frac{d \tilde{\p}_h}{d \p}(x) \frac{d \tilde{\p}_h}{d \p}(y \mid X = x) \\
  &= 
  \begin{cases}
    a, & h(x) = 1,\\
    \frac{\sigma_1}{\sigma_2} \exp \big(\frac{(x - \mu_1)^2}{2\sigma_1^2}  -  \frac{(x - \mu_2)^2}{2\sigma_2^2} \big) , & h(x) = -1.
  \end{cases}
\end{align*}

\section{Experiments}
\label{sec: experiments}
We evaluate the performance of PERM on synthetic and semi-synthetic data, under a range of linear performative posterior drifts. We compare it to standard empirical risk minimization (ERM), as well as a method inspired by the repeated risk minimization algorithm \citep{perdomo20a}.

\subsection{Minimizing PER}
\label{sec: surrogate}
We begin by describing how PERM is implemented in practice using a surrogate loss.
Directly minimizing the performative risk is challenging due to the non-smooth nature of binary classification. We seek a function \( f : \mathcal{X} \to \{-1, 1\} \) predicting a label \( y \) for input \( x \). In standard classification, since the risk depends only on \( y f(x) \), the zero-one loss is typically approximated by the following surrogate loss, namely the \emph{logistic loss}:
\begin{equation*}
    V(f(x), y) = \phi(y f(x)) := \log(1 + e^{-y f(x)}) / \log(2), 
\end{equation*}
This loss upper bounds the zero-one loss and is convex and differentiable \citep{logistic_loss}, thus enabling gradient-based optimization.

In the performative setting, however, the loss depends separately on the classifier and the label, not just their product. To accommodate this, we propose the following surrogate for the performative risk from \cref{lemma: performative_risk}: \vspace{-0.2cm}
\begin{align*}
    V(f(x),y) = &\alpha_1 (1 - 2 \phi(f(x))) + \alpha_2 y \\
     &+ \alpha_3 (1 - 2 \phi(y f(x))) + \alpha_4, \numberthis \label{eq: surrogate_loss}
\end{align*}
where \( \phi(y f(x)) \) accounts for \( \frac{1 - y h(x)}{2} \), and \( \phi(f(x)) \) (with \( y = 1 \)) for \( \frac{1 - h(x)}{2} \). This surrogate loss captures all classifier-dependent terms and serves as a convex and differentiable upper bound for the performative risk in \cref{lemma: performative_risk}, whenever \( \alpha_1 \leq 0 \) and \( \alpha_3 \leq 0 \). Note that in the non-performative case, setting $\alpha_1 = \alpha_2 = 0$, $\alpha_3 = -1/2$, and $\alpha_4 = 1/2$ recovers the usual log loss.

\subsection{Experiments}
\label{subsec: experiments}

We evaluate the PERM method on a synthetic dataset, as well as the 
Credit scoring (\cite{GiveMeSomeCredit}), and the U.S. Folktables (\cite{ding2021retiring}) datasets. Each experiment simulates a performative setting via the linear conditional label shift model from \cref{sec: model}, with different parameters $a_1, a_2, a_3, a_4$. We report average accuracy on the performative test set over 10 runs, along with one standard deviation confidence intervals, as shown in \cref{fig: MAIN_TEXT_PLOT}. Further details and additional experiments using alternative parameters $a_1,\ldots, a_4$ for the distribution map are presented in the supplementary material, \cref{appendix: A}.

In all settings, ERM optimizes the standard risk $\p[Y \neq h(X)]$, while PERM minimizes the full performative risk $\tilde{\p}_h[Y \neq h(X)]$, which accounts for distributional shifts as per \cref{lemma: performative_risk}.

\textbf{Synthetic Data}\hspace{1em}We generate 5{,}000 samples with features $X = (x_1, x_2)$ sampled uniformly from $[-3, 3]^2$, and labels assigned via $P[Y = 1 \mid X] = \sigma(0.5 x_2 - 0.2 x_1^2)$, where $\sigma$ is the sigmoid function. A linear model is trained on 80\% of the data. The performative shift follows the placebo scenario in \cref{sec: model}, with $a_1 = 1-a$, $a_2 = a$, $a_3 = 1$, $a_4 = 0$.

\begin{figure*}
    \includegraphics[width=\textwidth]{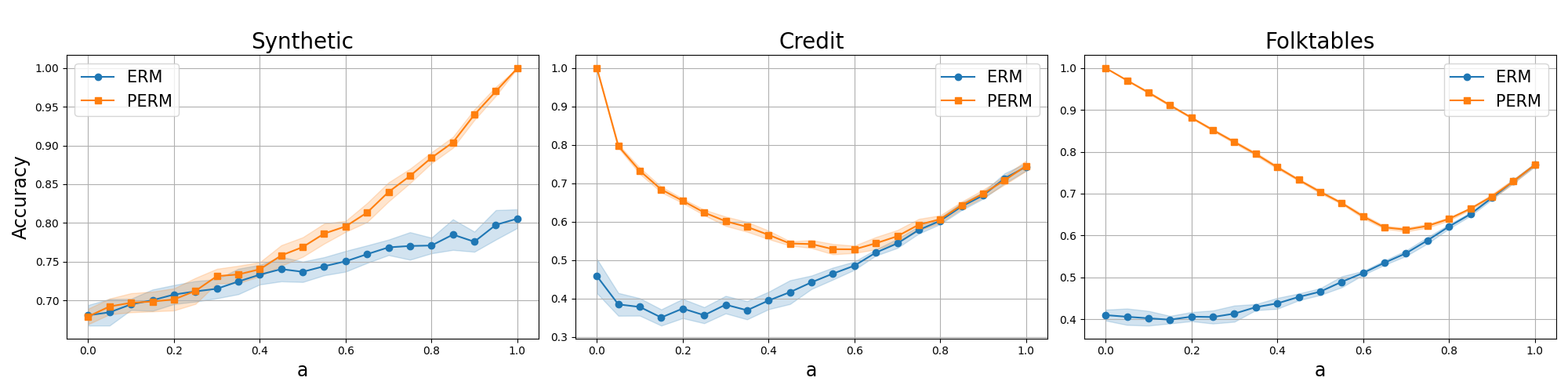}
    \caption{Accuracy on the performative test set as a function of effect strength $a$.}
    \label{fig: MAIN_TEXT_PLOT}
\end{figure*}

\begin{figure*}
    \includegraphics[width=\textwidth]{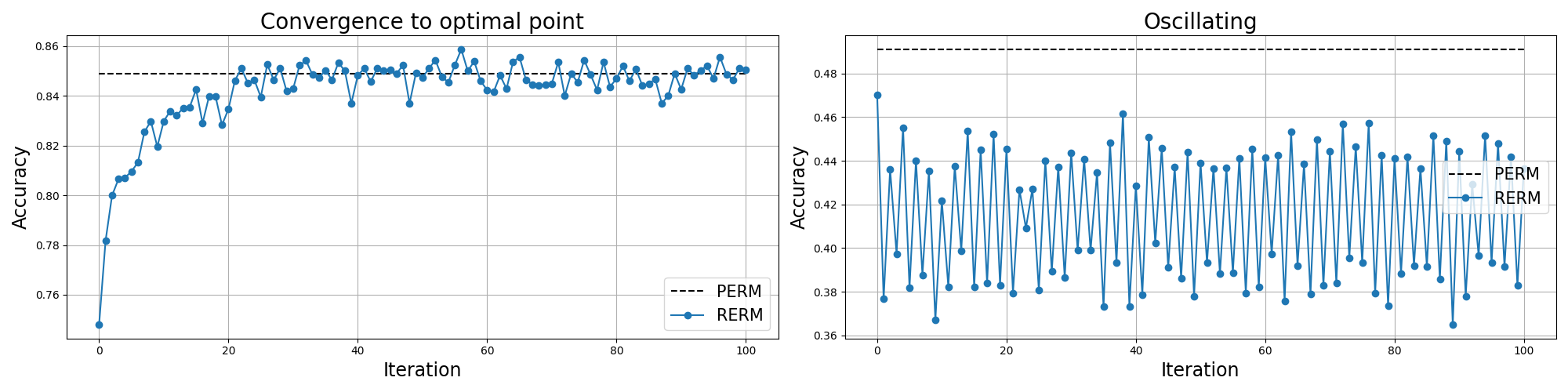}
    \caption{Behavior of the RERM-inspired method on the credit score dataset.}
    \label{fig:rerm_results}
\end{figure*}

\textbf{Credit Score Data}\hspace{1em}We use a class-balanced subset of the credit score dataset, with $X$ including features like monthly income and number of credit lines, and $Y \in \{1, -1\}$ indicating loan default. A one-layer neural network is trained. The performative shift uses $a_1 = a$, $a_2 = 1-\sqrt{a}$, $a_3 = a^3$, $a_4 = 1 - a^2$. To construct the performative test set, we estimate  the true underlying probability $P(Y \mid X)$ using an \textit{independent} random forest classifier with 18 trees and depth 8, trained on the full dataset. Each training is repeated 10 times with regenerations of the sampling of the balanced dataset and the simulation of the performative effect. 

\textbf{Folktables Data}\hspace{1em}We use a class-balanced subset of the Folktables dataset, specifically data for the state of California for 2018, where $X$ includes features such as class of worker, educational attainment, and hours worked per week, and $Y \in \{1, -1\}$ indicates whether income exceeds \$50{,}000. A two-layer neural network is trained, with a $30 \%$ validation and $30 \%$ test splits for hyperparameter tuning. The performative shift uses $a_1 = a$, $a_2 = 1-a$, $a_3 = a^2$, $a_4 = 1-a^2$, and label probabilities are estimated using the same random forest model as above on the full dataset. Each training is repeated 10 times with a regeneration of the balanced dataset and the simulation of the performative effect.

\textbf{Results}\hspace{1em}We observe that the PERM method (naturally) outperforms ERM, while also giving consistent results across repeats. For small performative effects ($a \approx 0$ for synthetic, $a \approx 1$ for real data), ERM and PERM perform similarly. As the strength of the performativity increases, PERM outperforms ERM by explicitly optimizing for the performative risk. When performativity is strong ($a \approx 1$ for synthetic, $a \approx 0$ for real data), PERM obtains accuracy up to $1$ by learning to predict a label which is assigned to each instance with probability $1$ due to performativity.

\subsection{Repeated Empirical Risk Minimization}
\label{section: RERM}

Here we investigate whether a popular performative learning method from prior work, namely the Repeated Empirical Risk Minimization (RERM) method of \cite{perdomo20a}, can be adapted within our framework. RERM addresses performative effects by iteratively retraining a model on data sampled from the distribution induced by the previously deployed model. Unlike the original RERM setup, which requires fresh samples from the shifted distribution at each iteration, our framework assumes access to only a single training dataset from the original distribution, along with knowledge of the performative distribution map. To adapt RERM to this setting, we fix the training features and update the labels at each iteration according to the distribution induced by the classifier from the previous step. This simulates new performative distributions without additional sampling.

We apply RERM to the credit score dataset from \cref{subsec: experiments}, using a performative label shift based on a learned classifier and parameterized via the linear shift in \cref{sec: model}. The model is trained using logistic loss, and the process repeats multiple times.
Results are shown in \cref{fig:rerm_results}. In the first scenario (Left), using parameters $a_1 = 0.1$, $a_2 = 0.8$, $a_3 = 0.2$, $a_4 = 0.3$, RERM converges to the same classifier as PERM, demonstrating that under moderate shifts and stable mappings, RERM can recover the performative optimum. In the second scenario (Right), with $a_1 = 0.8$, $a_2 = 0$, $a_3 = 0.48$, $a_4 = 0.52$, RERM fails to converge and oscillates between two classifiers. This illustrates that even with full knowledge of the performative map, convergence is not guaranteed. Note that the accuracy of PERM is acquired without the need for iterative training, but is nonetheless plotted for comparison.

\section{Conclusion}
\label{sec:future_work}

We formalized PAC learnability for performative binary classification problems. We studied performative PAC learnability 
for a family of linear performative posterior drifts. We derived a notion of performative empirical risk minimization (PERM) and showed its effectiveness via a theoretical analysis and experiments. 

Given the growing prevalence of performativity in real-world systems, we deem the development and analysis of performative equivalents of parametric classes of distribution shifts (e.g. the linear adjustment model of \cite{maity2024linear}) an exciting direction for future work. Also interesting is integrating real-world performative shifts, e.g. such based on randomized experiments in real contexts \citep{mendler2024engine}. Finally, analyzing the adapted version of RERM theoretically may enable guarantees for more general distribution maps.

\section*{Acknowledgements}

This research was partially funded by the Ministry of Education and Science of Bulgaria (support for INSAIT, part of the Bulgarian National Roadmap for Research Infrastructure).

\bibliography{Sections/bibliography.bib}
\bibliographystyle{plainnat}

\clearpage

\clearpage
\appendix
\thispagestyle{empty}

\onecolumn
\begin{center}
    {\LARGE Supplementary Material}
\end{center}

\appendix
\begin{itemize}
    \item \cref{appendix: A} contains further details on the experiments from \cref{sec: experiments} and additional experiments containing different variations of the linear distribution shift model in \cref{sec: model}.
    \item \cref{appendix: B} contains additional results regarding generalization bounds for finite hypothesis classes $\mathcal{H}$ in the setting of \cref{sec: linear_shift}.
    \item \cref{appendix: C} contains the proofs of all results from the main text. 
    \item \cref{appendix: D} contains information about compute resources used for the experiments.
\end{itemize}

\section{Experiments}
\label{appendix: A}

\subsection{Additional Details on Experiments}
\label{appendix: A.1}

\textbf{Synthetic Data}\hspace{1em}We use a simple linear model which receives 2-dimensional inputs and produces a scalar output with no activation function. This corresponds to a logistic regression classifier. The learning rate is initialized at $0.01$ and adjusted using a \texttt{ReduceLROnPlateau} scheduler monitoring the training loss, with a reduction factor of $0.9$ and a patience of $5$ epochs. The model is trained for $25$ epochs on the training set using the default batch size of $32$.

\textbf{Credit Score Data}\hspace{1em}The model is a single-hidden-layer neural network with 16 neurons, ReLU activations, and L2 regularization. The input consists of 10 features. Training uses the Adam optimizer with an initial learning rate of $0.01$ and the same learning rate scheduler as above. Depending on the setting, either a standard logistic loss or a custom performative loss is used. Training is run for $25$ epochs per trial with a batch size of $32$.

\textbf{Folktables Data}\hspace{1em}The model is a two-hidden-layer neural network with 64 and 16 neurons respectively, L2 regularization, and ReLU activations. The model uses the Adam optimizer with a learning rate of $0.05$ and a \texttt{ReduceLROnPlateau} scheduler. It is trained for $25$ epochs with a batch size of $1024$. A custom callback logs validation performance per epoch. When the performative effect is enabled, a surrogate performative loss is used in place of logistic loss. The dataset is split in two stages, using a 70\%-30\% ratio at each step to create training, validation, and test sets.

For each case, the custom loss function, described in \cref{sec: surrogate}, is compiled depending on whether the performative effect is enabled. Otherwise, the standard cross-entropy loss is used.

\subsection{Additional Experiments}
    
In this section, we present additional experiments exploring the impact of different parametrizations of the linear distribution shift model described in \cref{sec: model}. Specifically, we consider five different settings for the parameters $a_1, a_2, a_3$, and $a_4$, which we apply across all three datasets used in our main experiments: Synthetic, Credit Score, and Folktables. In all cases, the rest of the experimental setup, including the model architecture, training procedure, and evaluation metrics, are kept consistent with those described in \cref{sec: experiments} and \cref{appendix: A.1}
    
Each of the five performative models is defined as a function of the scalar parameter $a \in [0, 1]$:
    
\begin{enumerate}
        \item Model 1: $a_1 = a,\quad a_2 = 0.75 (1 - a),\quad a_3 = 1,\quad a_4 = 0$
        \item Model 2: $a_1 = 1,\quad a_2 = 0,\quad a_3 = a^3,\quad a_4 = 1 - a^2$
        \item Model 3: $a_1 = a^3,\quad a_2 = 1 - a^2,\quad a_3 = a^2,\quad a_4 = 1 - a$
        \item Model 4: $a_1 = 1,\quad a_2 = 0,\quad a_3 = \frac{a^2}{a^2 + 1},\quad a_4 = \frac{1}{a^2 + 1}$
        \item Model 5: $a_1 = \frac{1}{4} - a^2,\quad a_2 = a^2,\quad a_3 = a,\quad a_4 = \max\left(0, \frac{7}{8} - \frac{a}{2} - \frac{a^2}{2}\right)$
\end{enumerate}
    
These parameter settings were chosen to reflect a wide range of performative behaviors, including nonlinear dependences on the shift parameter $a$. The results are shown in \cref{fig:combined_performative_results}. In each case, a weak performative effect renders both the ERM and PERM methods similar in accuracy. As the strength of the performativity increases, the PERM method consistently outperforms ERM, demonstrating its robustness to performative shifts. The results also indicate that the choice of model parameters can significantly influence the performance of both methods, with some configurations leading to more pronounced differences than others.
    
    \begin{figure}
        \centering
        \includegraphics[width=\textwidth]{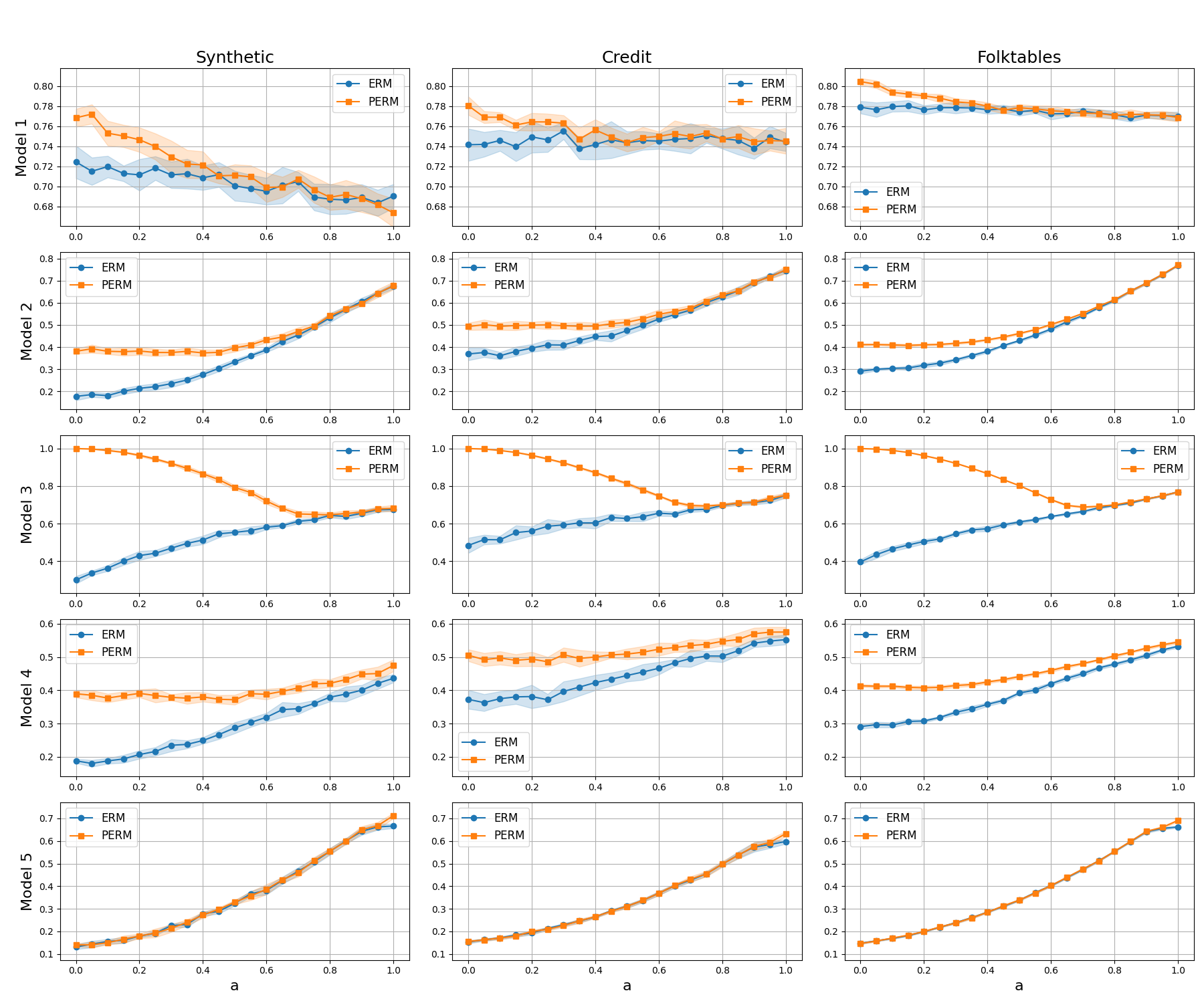}
        \caption{Experiments on the models 1 - 5 for the three datasets: Synthetic, Credit Score, and Folktables.}
       \label{fig:combined_performative_results}
    \end{figure}

\section{Results for Finite Hypothesis}
\label{appendix: B}
\begin{theorem}
    \label{theorem: finite_hypothesis}
    Let $\mathcal{H}$ be a class of hypothesis functions with $|\mathcal{H}| < \infty$ and let $\R(h)$ and $\R_n(h)$ be defined as in \cref{lemma: performative_risk}. Then for any $\delta > 0$ with probability at least $1-\delta$ we have that
    \begin{equation*}
        \forall h \in \mathcal{H}, \quad |\R(h) - \R_n(h)| \le \sqrt{\frac{ A_{diff}^2 (\log{2|\mathcal{H}|} + \log{\frac{1}{\delta}} )}{2n}},
    \end{equation*}
    where 
    \begin{equation*}
        A_{diff} := 2 (|\alpha_1| + |\alpha_2| + |\alpha_3|).
    \end{equation*}
\end{theorem}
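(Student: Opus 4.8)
The plan is to bound each $f_h(X_i, Y_i)$ in an interval of length $A_{\text{diff}}$ and then apply a standard finite-class uniform convergence argument via Hoeffding's inequality and a union bound. First I would observe that by Lemma~\ref{lemma: performative_risk}, $\R_n(h) = \frac{1}{n}\sum_{i=1}^n f_h(X_i, Y_i)$ is an unbiased estimator of $\R(h)$, i.e. $\E[f_h(X, Y)] = \R(h)$. So fixing $h \in \mathcal{H}$, the summands $f_h(X_i, Y_i)$ are i.i.d.\ with mean $\R(h)$, and the task reduces to controlling their deviation.

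Next I would establish the boundedness of the range of $f_h$. Since $h(X) \in \{-1, 1\}$ and $Y \in \{-1, 1\}$, each of the terms $h(X)$, $Y$, and $Y h(X)$ lies in $[-1, 1]$, so $f_h(X, Y) = \alpha_1 h(X) + \alpha_2 Y + \alpha_3 Y h(X) + \alpha_4$ ranges over an interval whose half-width is at most $|\alpha_1| + |\alpha_2| + |\alpha_3|$; hence $f_h$ takes values in an interval of total length at most $2(|\alpha_1| + |\alpha_2| + |\alpha_3|) = A_{\text{diff}}$. (One can be slightly more careful here — the three sign variables are not independent since $(h(X))(Y)(Yh(X)) = Y^2 h(X)^2 = 1$, so not all eight sign patterns are attainable — but the crude bound $A_{\text{diff}}$ suffices and is what the statement claims.)

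Then I would apply Hoeffding's inequality to the i.i.d.\ bounded variables $f_h(X_i, Y_i)$: for fixed $h$,
\[
\p_S\!\left[\, |\R(h) - \R_n(h)| > t \,\right] \le 2 \exp\!\left( -\frac{2 n t^2}{A_{\text{diff}}^2} \right).
\]
Taking a union bound over all $h \in \mathcal{H}$ (there are $|\mathcal{H}| < \infty$ of them) gives that $|\R(h) - \R_n(h)| > t$ for some $h$ with probability at most $2|\mathcal{H}| \exp(-2nt^2/A_{\text{diff}}^2)$. Setting this equal to $\delta$ and solving for $t$ yields $t = \sqrt{\frac{A_{\text{diff}}^2 (\log 2|\mathcal{H}| + \log(1/\delta))}{2n}}$, which is exactly the claimed bound, and completes the proof.

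There is no serious obstacle here — this is the textbook finite-hypothesis uniform convergence argument transplanted to the PER setting, and the only mildly delicate point is correctly reading off the range of $f_h$ to get the constant $A_{\text{diff}}$. The main thing to be careful about is bookkeeping: confirming via Lemma~\ref{lemma: performative_risk} that $\R_n(h)$ is genuinely unbiased so that Hoeffding applies with the correct mean, and tracking the factor of $2$ in $A_{\text{diff}}$ (length of the interval, not its half-width) so that the constant in the final bound matches.
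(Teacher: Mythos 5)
Your proposal is correct and follows essentially the same route as the paper's proof: bound $f_h$ in an interval of length $A_{\text{diff}} = 2(|\alpha_1|+|\alpha_2|+|\alpha_3|)$, invoke the unbiasedness from Lemma~\ref{lemma: performative_risk}, apply Hoeffding's inequality for fixed $h$, and finish with a union bound over the finite class. Your aside about the attainable sign patterns is a nice extra observation, but the paper uses the same crude interval bound, so the two arguments coincide.
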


\begin{proof}
    First we show that we can apply Hoeffding's inequality to concentrate the performative empirical risk to the performative risk. The random variables $\{ f_h(X_i, Y_i) \}_{i=1}^n$ are i.i.d. with 
    \begin{equation*}
        A_{min} \le f_h(X_i, Y_i) \le A_{max},
    \end{equation*}
    where 
    \begin{equation*}
        A_{max} := |\alpha_1| + |\alpha_2| + |\alpha_3| + \alpha_4, \quad A_{min} := \alpha_4 - (|\alpha_1| + |\alpha_2| + |\alpha_3|)
    \end{equation*}
    which follows directly from the definition of the class $\mathcal{F}_{\mathcal{H}}$. Therefore, we apply Hoeffding's inequality, which gives
    \begin{equation*}
        \mathbb{P}\bigg[  \bigg|  \frac{1}{n}\sum_{i=1}^n f_h(X_i, Y_i) - \mathbb{E}[f_h(X_1, Y_1)] \bigg| > \epsilon \bigg] \le 2 \exp\bigg(\frac{-2n\epsilon^2 }{A_{diff}^2}\bigg),
    \end{equation*}
    where 
    \begin{equation*}
        A_{diff} = A_{max} - A_{min} = 2 (|\alpha_1| + |\alpha_2| + |\alpha_3|).
    \end{equation*}
    As we know from \cref{lemma: performative_risk}, 
    \begin{equation*}
        \mathbb{E}[f_h(X_1, Y_1)] = \R(h),
    \end{equation*}
    and so
    \begin{align*}
        \mathbb{P}[\exists h \in \mathcal{H} \, : \,  | \R_n(h) - \tilde{R}(h) |  > \epsilon] &= \mathbb{P}[( | \R_n(h_1) - \tilde{R}(h_1) |  > \epsilon) \vee \dots ( | \R_n(h_{ | \mathcal{H} | }) - \tilde{R}(h_{ | \mathcal{H} | }) |  > \epsilon)]\\
        &\le \sum_{h \in \mathcal{H}} \mathbb{P}[ | \R_n(h) - \tilde{R}(h) | >\epsilon]\\
        &\le 2 | \mathcal{H} | \exp\bigg(\frac{-2n\epsilon^2 }{A_{diff}^2}\bigg).
    \end{align*}
    This means that with probability at least $1-\delta$, 
    \begin{equation*}
        \forall h \in \mathcal{H}, \quad |\R(h) - \R_n(h)| \le \sqrt{\frac{ A_{diff}^2 (\log{2|\mathcal{H}|} + \log{\frac{1}{\delta}} )}{2n}}.
    \end{equation*}

\end{proof}

\begin{corollary}
    In the setting of \cref{theorem: finite_hypothesis}, let $h_S^\ast \in \argmin_{h \in \mathcal{H}} \R_n(h)$. Then with probability at least $1-\delta$
    \begin{equation*}
        \R(h_S^\ast) \le \min_{h \in \mathcal{H}} \R(h) + \sqrt{\frac{ 2 A_{diff}^2 (\log{2|\mathcal{H}|} + \log{\frac{1}{\delta}} )}{n}}.
    \end{equation*}
\end{corollary}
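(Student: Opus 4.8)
The plan is to combine the uniform concentration bound of \cref{theorem: finite_hypothesis} with the defining optimality of the empirical risk minimizer, in the classic ``ERM sandwich'' fashion.

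First I would invoke \cref{theorem: finite_hypothesis}: for the given $\delta$, with probability at least $1-\delta$ over the draw of $S$, the event
\[
\mathcal{E} := \Big\{\, \forall h \in \mathcal{H}\colon\ |\R(h) - \R_n(h)| \le \gamma_n \,\Big\}, \qquad \gamma_n := \sqrt{\tfrac{A_{diff}^2\,(\log 2|\mathcal{H}| + \log\tfrac{1}{\delta})}{2n}},
\]
holds, and the rest of the argument is conducted on $\mathcal{E}$. Note that $\mathcal{E}$ is a single event on which the deviation is controlled at \emph{every} $h \in \mathcal{H}$ at once.

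Next, let $h^\star \in \argmin_{h \in \mathcal{H}} \R(h)$, which exists since $|\mathcal{H}| < \infty$. On $\mathcal{E}$ I would chain three inequalities: applying the uniform bound at $h_S^\ast$ gives $\R(h_S^\ast) \le \R_n(h_S^\ast) + \gamma_n$; the optimality $h_S^\ast \in \argmin_{h\in\mathcal{H}} \R_n(h)$ gives $\R_n(h_S^\ast) \le \R_n(h^\star)$; and the uniform bound at $h^\star$ gives $\R_n(h^\star) \le \R(h^\star) + \gamma_n$. Composing these,
\[
\R(h_S^\ast) \le \R(h^\star) + 2\gamma_n = \min_{h \in \mathcal{H}} \R(h) + 2\gamma_n .
\]
Finally I would rewrite the constant, using $2\gamma_n = \sqrt{4\gamma_n^2} = \sqrt{\tfrac{2 A_{diff}^2\,(\log 2|\mathcal{H}| + \log\tfrac{1}{\delta})}{n}}$, which is exactly the claimed bound.

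The argument is entirely routine, and there is no genuine obstacle; the only point that warrants attention is that \cref{theorem: finite_hypothesis} must be used in its uniform-over-$\mathcal{H}$ form, so that the same high-probability event $\mathcal{E}$ may be invoked both at the data-dependent hypothesis $h_S^\ast$ and at the fixed comparator $h^\star$, with no additional union-bound penalty beyond the $\log 2|\mathcal{H}|$ already present in $\gamma_n$.
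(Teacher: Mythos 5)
Your proposal is correct and follows exactly the paper's argument: invoke the uniform deviation bound of \cref{theorem: finite_hypothesis} on a single high-probability event, chain $\R(h_S^\ast) \le \R_n(h_S^\ast) + \gamma_n \le \R_n(h^\star) + \gamma_n \le \R(h^\star) + 2\gamma_n$, and absorb the factor $2$ under the square root. No differences worth noting.
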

\begin{proof}
    The proof is essentially the same as in \cite{mohri2018foundations} and we give it here for completeness only. Using \cref{theorem: finite_hypothesis}, we have that for every $h \in \mathcal{H}$,
\begin{align*}
    \R(h_S^\ast) &\le \R_n(h_S^\ast) + \sqrt{\frac{ A_{diff}^2 (\log{2|\mathcal{H}|} + \log{\frac{1}{\delta}} )}{2n}}\\
    &\le \R_n(h) + \sqrt{\frac{ A_{diff}^2 (\log{2|\mathcal{H}|} + \log{\frac{1}{\delta}} )}{2n}} \\
    &\le \R(h) + 2\sqrt{\frac{ A_{diff}^2 (\log{2|\mathcal{H}|} + \log{\frac{1}{\delta}} )}{2n}}.
\end{align*}
\end{proof}

\section{Proofs}
\label{appendix: C}

\subsection{Proof of \cref{lemma: performative_risk}}
\begin{lemma*}
    Let $S = \{ (X_i, Y_i) \}_{i=1}^n$ be a set of $n$ i.i.d samples from $\mathcal{D}$ and let $\tilde{\mathcal{D}}$ be the distribution obtained from $\mathcal{D}$ after the performative linear shift described in \cref{sec: model}. For a hypothesis $h \in \mathcal{H}$, the performative linear risk is given by
    Let \(S = \{ (X_i, Y_i) \}_{i=1}^n\) be a set of \(n\) i.i.d.\ samples from \(\mathcal{D}\), and let \(h \in \mathcal{H}\) be a hypothesis. Suppose \(\tilde{\mathcal{D}}(h)\) is the distribution obtained from \(\mathcal{D}\) after a performative linear shift as defined in \cref{eq: distribution_map}. Then the performative risk \(\R(h)\) is given by
    \begin{align*}
        \R(h)= (1-a_2) \p[h(X) = 1]
        - a_1 \p[Y=1, h(X)=1]
        + a_3 \p[Y=1, h(X)=-1]\\
        + a_4 \p[ h(X)=-1]. 
    \end{align*}
    Moreover, an unbiased estimator of \(\R(h)\) is the performative empirical risk \vspace{-0.15cm}
    \begin{equation*}
        {\R}_n(h) := \frac{1}{n} \sum_{i=1}^n \big(\alpha_1 h(X_i) + \alpha_2 Y_i + \alpha_3 Y_i h(X_i) + \alpha_4 \big), 
    \end{equation*}
    where \(\alpha_1 = \frac{2 - a_1 - 2a_2 -a_3 - 2a_4}{4}, \ \alpha_2 = \frac{a_3 - a_1}{4}, \
        \alpha_3 = \frac{-a_1 - a_3}{4}, \ \alpha_4 = \frac{2 - a_1 - 2a_2 + a_3 + 2a_4}{4}.\)
\end{lemma*}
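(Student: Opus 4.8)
<br>

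The plan is to compute the performative risk directly from its definition and then re-express it as an expectation of a function linear in $h(X)$ and $Y$. First I would write
\[
\R(h) = \tilde{\p}_h[h(X)\neq Y] = \tilde{\p}_h[h(X)=1, Y=-1] + \tilde{\p}_h[h(X)=-1, Y=1],
\]
and condition on $X$. Since the performative shift in \cref{eq: linear_model} only modifies the conditional law of $Y\mid X$ and leaves the marginal of $X$ untouched, for each event $\{h(X)=1\}$ the shifted conditional is $\tilde{\p}_h[Y=1\mid X=x] = a_1\p[Y=1\mid X=x] + a_2$, and on $\{h(X)=-1\}$ it is $a_3\p[Y=1\mid X=x] + a_4$. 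Integrating over $X$ (splitting according to the value of $h(X)$, which is $X$-measurable) turns these into the unconditional quantities $\p[Y=1, h(X)=1]$, $\p[h(X)=1]$, etc. Collecting terms — using $\tilde{\p}_h[Y=-1, h(X)=1] = \p[h(X)=1] - \tilde{\p}_h[Y=1,h(X)=1]$ — yields exactly the claimed closed form for $\R(h)$.

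For the second part, the goal is to match the four-term expression for $\R(h)$ with $\E_{\p}[\alpha_1 h(X) + \alpha_2 Y + \alpha_3 Y h(X) + \alpha_4]$. Here I would use the indicator-to-$\pm1$ dictionary: since $h(X), Y \in \{-1,1\}$, we have $\mathbf{1}_{\{h(X)=1\}} = \frac{1+h(X)}{2}$, $\mathbf{1}_{\{Y=1\}} = \frac{1+Y}{2}$, and $\mathbf{1}_{\{Y=1, h(X)=1\}} = \frac{(1+Y)(1+h(X))}{4} = \frac{1 + Y + h(X) + Yh(X)}{4}$, and similarly $\mathbf{1}_{\{Y=1, h(X)=-1\}} = \frac{(1+Y)(1-h(X))}{4}$ and $\mathbf{1}_{\{h(X)=-1\}} = \frac{1-h(X)}{2}$. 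Substituting these into the closed form for $\R(h)$ and grouping the coefficients of $h(X)$, $Y$, $Yh(X)$, and the constant gives a linear system whose solution is precisely the stated $\alpha_1,\alpha_2,\alpha_3,\alpha_4$. Because $\R_n(h)$ is an average of i.i.d. copies of $f_h(X_i,Y_i)$ whose common expectation is $\E_{\p}[f_h(X,Y)] = \R(h)$, unbiasedness is then immediate by linearity of expectation.

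The only real obstacle is bookkeeping: one must be careful that the performative map acts purely on $Y\mid X$ so the $X$-marginal is unchanged (otherwise the reduction to unconditional probabilities of joint events would fail), and the algebra of collecting the coefficients of the four monomials must be done without sign errors. I would organize this as a single substitution followed by a coefficient comparison, double-checking by verifying the degenerate case $a_1 = a_3 = 1$, $a_2 = a_4 = 0$: there $\alpha_1 = 0$, $\alpha_2 = 0$, $\alpha_3 = -1/2$, $\alpha_4 = 1/2$, so $f_h = \frac{1 - Yh(X)}{2} = \mathbf{1}_{\{h(X)\neq Y\}}$, recovering the ordinary empirical risk as a sanity check.
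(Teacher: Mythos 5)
Your proposal is correct and follows essentially the same route as the paper's proof: condition on $X$, apply the linear shift to $Y \mid X$ on the events $\{h(X)=1\}$ and $\{h(X)=-1\}$ to obtain the four-term closed form, then substitute the $\pm 1$ identities $\mathbf{1}_{\{h(X)=1\}} = \tfrac{1+h(X)}{2}$, $\mathbf{1}_{\{Y=1,h(X)=\pm 1\}} = \tfrac{(1+Y)(1\pm h(X))}{4}$ and collect coefficients. Your sanity check of the non-performative case $a_1=a_3=1$, $a_2=a_4=0$ recovering $\tfrac{1-Yh(X)}{2}$ is a nice addition not present in the paper.
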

\begin{proof}
    Let $F_X(x)$ denote the distribution function of $X$. Using the law of total probability and the linear relationship between the measures $\tilde{\p}_h$ and $\p$ we have 
    \begin{align*}
        \tilde{\p}_h[Y \ne h(X)] &= \int_{\{x \in \mathcal{X}\}} \tilde{\p}_h[Y \ne h(X) | X = x] dF_X(x)\\
        &= \int_{\{x \in \mathcal{X} : h(x) = 1\}} \tilde{\p}_h[Y = -1 | X = x] dF_X(x) +  \int_{\{x \in \mathcal{X} : h(x) = -1\}} \tilde{\p}_h[Y = 1 | X = x] dF_X(x)\\
        &=  \int_{\{x \in \mathcal{X} : h(x) = 1\}} (1 - a_1 \p[Y=1|X=x] - a_2) dF_X(x)\\
        &+  \int_{\{x \in \mathcal{X} : h(x) = -1\}} (a_3 \p[Y=1|X=x] + a_4) dF_X(x)\\
        &=(1 - a_2)\int_{\{x \in \mathcal{X} : h(x) = 1\}} dF_X(x)  - a_1 \int_{\{x \in \mathcal{X} : h(x) = 1\}} \p[Y=1 | X=x] dF_X(x) \\
        & + a_3 \int_{\{x \in \mathcal{X} : h(x) = -1\}} \p[Y=1|X=x] dF_X(x)  + a_4 \int_{\{x \in \mathcal{X} : h(x) = -1\}} dF_X(x)\\
        &= (1-a_2) \p[h(X) = 1] - a_1 \p[Y=1, h(X)=1] + a_3 \p[Y=1, h(X)=-1] + a_4 \p[ h(X)=-1]
    \end{align*}
    The performative risk is now written in terms of the measure $\p$, and therefore it can be approximated only using the samples $S = \{ (X_1, Y_1), \dots, (X_n , Y_n) \}$ from $\mathcal{D}$. An unbiased estimator of $\R(h)$ is given by
    \begin{equation*}
        \R_n(h) := \frac{1}{n} \sum_{i=1}^n (1 - a_2) \mathbf{1}_{\{ h(X_i) = 1\}} - a_1 \mathbf{1}_{\{ Y_i = 1, h(X_i) = 1 \}} + a_3 \mathbf{1}_{\{ Y_i = 1, h(X_i)=-1 \}} + a_4 \mathbf{1}_{\{ h(X_i)=-1 \}}.
    \end{equation*}
    Since $Y, h(X) \in \{ -1, 1\}$ for all $(X, Y)$, this can be written as 
    \begin{align*}
        \R_n(h) &:= \frac{1}{n} \sum_{i=1}^n \frac{1 - a_2}{2} (1 + h(X_i)) - \frac{a_1}{4} (1 + Y_i)(1 + h(X_i)) + \frac{a_3}{4} (1 + Y_i)(1 - h(X_i)) + \frac{a_4}{2} (1-h(X_i))\\
        &= \frac{2 - a_1 - 2a_2 -a_3 - 2a_4}{4} h(X_i) + \frac{a_3 - a_1}{4} Y_i + \frac{-a_1 - a_3}{4} Y_i h(X_i) + \frac{2 - a_1 - 2a_2 + a_3 + 2a_4}{4},
    \end{align*}
    which finishes the proof.
\end{proof}

\subsection{Proof of \cref{theorem: infinite_hypothesis}}
\begin{theorem*}
    Let $\mathcal{H}$ be a hypothesis class, $S = \{(x_i, y_i)\}_{i=1}^n$ be $n$ i.i.d. samples from $\mathcal{D}$, and let $\R_n(h)$ be defined as in \cref{eq: empirical_risk}. Then, given a distribution map $\tilde{D}(\cdot)$ defined as in \cref{eq: distribution_map}, for any $\delta > 0$ with probability at least $1-\delta$ we have that for all $h \in \mathcal{H}$ \vspace{-0.15cm}
    \begin{align*}
       |\R(h) - \R_n(h)| \le  2 (|\alpha_1| + |\alpha_3|) \mathfrak{R}_n(\mathcal{H}) 
        + 2((1 + \sqrt{2})|\alpha_1| + |\alpha_2| + (1 + \sqrt{2})|\alpha_3|)\sqrt{\frac{\ln{\frac{4}{\delta}}}{2n}}.
    \end{align*} 
\end{theorem*}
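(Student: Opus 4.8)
The plan is to bound the supremum $\sup_{h \in \mathcal{H}} |\R(h) - \R_n(h)|$ via a standard symmetrization/Rademacher argument applied to the function class $\mathcal{F}_{\mathcal{H}}$, but since $\R_n(h) = \frac{1}{n}\sum_i f_h(X_i,Y_i)$ with $f_h = \alpha_1 h(X) + \alpha_2 Y + \alpha_3 Y h(X) + \alpha_4$, I would decompose the deviation term by term rather than bound $\mathfrak{R}_n(\mathcal{F}_{\mathcal{H}})$ directly. First I would recall from \cref{lemma: performative_risk} that $\E[f_h(X,Y)] = \R(h)$, so the quantity of interest is $\sup_h |\frac1n \sum_i f_h(X_i,Y_i) - \E f_h|$. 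The constant term $\alpha_4$ cancels exactly and the term $\alpha_2 Y$ is hypothesis-independent, so it contributes only a scalar concentration term, not a Rademacher term — this is why $|\alpha_2|$ appears only in the $\sqrt{\ln(4/\delta)/2n}$ coefficient and not in front of $\mathfrak{R}_n(\mathcal{H})$.

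Next, for the $h$-dependent parts I would apply the bounded-differences (McDiarmid) inequality to the function $\Phi(S) := \sup_{h\in\mathcal{H}} |\frac1n\sum_i (\alpha_1 h(X_i) + \alpha_3 Y_i h(X_i)) - \E[\alpha_1 h(X) + \alpha_3 Y h(X)]|$: changing one sample changes each summand by at most $2(|\alpha_1| + |\alpha_3|)/n$, giving a $2(|\alpha_1|+|\alpha_3|)\sqrt{\ln(\cdot)/2n}$-type tail. Then symmetrization gives $\E\,\Phi(S) \le 2\,\mathfrak{R}(\{\alpha_1 h + \alpha_3 Y h : h\in\mathcal{H}\})$. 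Here I would use that $Y_i \in \{-1,1\}$ so $\sigma_i Y_i$ is again a Rademacher variable, hence the Rademacher complexity of $\{Y h(X) : h\in\mathcal{H}\}$ equals $\mathfrak{R}_n(\mathcal{H})$, and by the triangle-inequality/subadditivity of Rademacher complexity the whole thing is at most $(|\alpha_1| + |\alpha_3|)\,\mathfrak{R}_n(\mathcal{H})$, yielding the $2(|\alpha_1|+|\alpha_3|)\mathfrak{R}_n(\mathcal{H})$ leading term after accounting for the factor from symmetrization. Separately, McDiarmid on the scalar $\frac1n\sum_i \alpha_2 Y_i$ contributes $|\alpha_2|\sqrt{\ln(\cdot)/2n}$, and the data-dependent-vs-expected Rademacher complexity swap ($\mathfrak{R}(\mathcal{H})$ vs $\mathfrak{R}_n(\mathcal{H})$) costs another bounded-differences term scaling with $|\alpha_1| + |\alpha_3|$. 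I would then union-bound these (three or four) events at confidence $\delta$ each scaled appropriately, producing the $4/\delta$ inside the logarithm and the combined coefficient $2((1+\sqrt2)|\alpha_1| + |\alpha_2| + (1+\sqrt2)|\alpha_3|)$, where the $\sqrt2$ factors arise from collecting the $1/\sqrt2$-weighted McDiarmid terms (e.g. from bounding the deviation of $\mathfrak{R}_n$ from $\mathfrak{R}$).

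The main obstacle is bookkeeping the constants so they match exactly: carefully tracking the ranges of the individual summands (which depend on $|\alpha_1|, |\alpha_2|, |\alpha_3|$ in different combinations for the absolute-deviation version versus the one-sided version), deciding how to split $\delta$ across the union bound so that all logarithmic factors collapse to $\ln(4/\delta)$, and correctly combining the symmetrization factor of $2$ with the data-dependent Rademacher swap so the final Rademacher coefficient is $2(|\alpha_1|+|\alpha_3|)$ and not, say, $4(|\alpha_1|+|\alpha_3|)$. The probabilistic content is entirely standard (McDiarmid + symmetrization + contraction-free linear manipulation, since all our maps of $h(X)$ are affine), so no new inequality is needed — the work is in the constant-chasing and in observing the two structural simplifications: $\alpha_2 Y$ being hypothesis-free, and $\sigma_i Y_i \stackrel{d}{=} \sigma_i$.
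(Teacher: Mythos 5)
Your proposal is correct and follows essentially the same route as the paper's proof: McDiarmid on the supremum of the deviation, symmetrization, the observation that $\sigma_i Y_i$ is again a Rademacher variable so the hypothesis-dependent part contributes $(|\alpha_1|+|\alpha_3|)\mathfrak{R}(\mathcal{H})$, a second McDiarmid step to pass from $\mathfrak{R}(\mathcal{H})$ to $\mathfrak{R}_n(\mathcal{H})$ (the source of the $\sqrt{2}$ factors), and a final union bound. The only real difference is that you peel off the hypothesis-free $\alpha_2 Y$ term before applying McDiarmid, whereas the paper keeps the full $f_h$ in a single application (with bounded difference $2(|\alpha_1|+|\alpha_2|+|\alpha_3|)/n$) and lets the $\alpha_2$ and $\alpha_4$ terms drop out only inside the Rademacher computation; your split costs an extra event in the union bound, so unless you re-merge the scalar concentration into the main McDiarmid step you would end up with $\ln(6/\delta)$ in place of $\ln(4/\delta)$ --- a purely cosmetic discrepancy.
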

\begin{proof}
    The proof of \cref{theorem: infinite_hypothesis} builds upon the proof of Theorem 5 in \cite{Oliver}, which establishes upper bounds for the non-performative case. However, we introduce several crucial modifications to account for the performative effect.
    \begin{itemize}
        \item Step 1. 
        We start by showing that McDiarmid's inequality can be applied to $\sup_{h \in \mathcal{H}} \{\R(h) - \R_n(h)\}$. Specifically, we denote by $\R_n'(h)$ the performative empirical risk obtained by modifying one element, i.e. $(X_i, Y_i)$ is replaced by $(X_i', Y_i')$. Then
        \begin{align*}
            | \sup_{h \in \mathcal{H}} \{\R(h) - \R_n(h)\} - \sup_{h \in \mathcal{H}} \{\R(h) - \R_n'(h)\} | 
            &\le \sup_{h \in \mathcal{H}} | \R_n'(h) - \R_n(h) |\\
            &= \sup_{h \in \mathcal{H}} \frac{1}{n} | f_h(X_i', Y_i') - f_h(X_i, Y_i) | \\
            &\le \frac{2(|\alpha_1| + |\alpha_2| + |\alpha_3|)}{n}
        \end{align*}
        where the last inequality comes from the definition of the function $f_h \in \mathcal{F}_{\mathcal{H}}$. Thus we can apply McDiarmid's inequality with $c={2(|\alpha_1| + |\alpha_2| + |\alpha_3|)}/{n}$, giving us that
        \begin{equation*}
            \p\bigg[  \sup_{h \in \mathcal{H}} \{\R(h) - \R_n(h)\} - \E \bigg[\sup_{h \in \mathcal{H}} \{\R(h) - \R_n(h)\}  \bigg]   \ge \epsilon \bigg] \le  \exp{\bigg(\frac{- n \epsilon^2}{2(|\alpha_1| + |\alpha_2| + |\alpha_3|)^2} \bigg)}
        \end{equation*}
        or equivalently, with probability at least $1-\delta/4$
        \begin{equation}
            \sup_{h \in \mathcal{H}} \{\R(h) - \R_n(h)\} \le \E\bigg[\sup_{h \in \mathcal{H}} \{\R(h) - \R_n(h)\}\bigg] + \sqrt{\frac{2(|\alpha_1| + |\alpha_2| + |\alpha_3|)^2\ln{\frac{4}{\delta}}}{n}}. \label{eq: bound 1}
        \end{equation}
        \item Step 2.
        Next, we use symmetrization to relate the expectation $\E\bigg[\sup_{h \in \mathcal{H}} \{\R(h) - \R_n(h)\}\bigg]$ to the Rademacher average. 

        In the following,  we introduce a ghost sample $S' = \{Z_1', \dots, Z_n' \}$ and use the fact that $\R(h) = \E_{S'}[\R_n'(h)]$. Also, we denote by $\E_S$ the expectation w.r.t. the sample $S = \{Z_1, \dots, Z_n \}$ and by $\E$ the full expectation. We also use Jensen's inequality for the supremum function and that $f_h(Z_i') - f_h(Z_i)$ and $\sigma_i (f_h(Z_i') - f_h(Z_i))$ have the same distribution for all $h \in \mathcal{H}$.
        \begin{align*}
            \E\left[\sup_{h \in \mathcal{H}} \left\{\R(h) - \R_n(h)\right\}\right] 
            &= \E_S\left[\sup_{h \in \mathcal{H}} \left\{\E_{S'}\left[\R_n'(h)\right] - \R_n(h)\right\}\right] \\
            &= \E_S\left[\sup_{h \in \mathcal{H}} \left\{\E_{S'}\left[\R_n'(h) - \R_n(h)\right]\right\}\right] \\
            &\leq \E_S\left[\E_{S'}\left[\sup_{h \in \mathcal{H}} \left\{\R_n'(h) - \R_n(h)\right\}\right]\right] \\
            &= \E\left[\sup_{h \in \mathcal{H}} \left\{\R_n'(h) - \R_n(h)\right\}\right] \\
            &= \E\left[\sup_{h \in \mathcal{H}} \left\{\frac{1}{n} \sum_{i=1}^n \sigma_i \left(f_h(Z_i') - f_h(Z_i)\right)\right\}\right] \\
            &\leq \E\left[\sup_{h \in \mathcal{H}} \left\{\frac{1}{n} \sum_{i=1}^n \sigma_i f_h(Z_i')\right\}\right] 
                + \E\left[\sup_{h \in \mathcal{H}} \left\{\frac{1}{n} \sum_{i=1}^n -\sigma_i f_h(Z_i)\right\}\right] \\
            &= 2 \E\left[\sup_{h \in \mathcal{H}} \left\{\frac{1}{n} \sum_{i=1}^n \sigma_i f_h(Z_i)\right\}\right] \\
            &= 2 \E\left[\sup_{f_h \in \mathcal{F}_{\mathcal{H}}} \left\{\frac{1}{n} \sum_{i=1}^n \sigma_i f_h(Z_i)\right\}\right] \\
            &= 2 \mathfrak{R}(\mathcal{F}_{\mathcal{H}})
        \end{align*} 
        Note that the above is almost identical to the symmetrization lemma (e.g. from \cite{Oliver}), with the small change of supremum from $\mathcal{H}$ to $\mathcal{F}_{\mathcal{H}}$ on the penultimate equality, which holds since for each function $h$ we have only one value for $f_h$. 

        Combining this with equation \eqref{eq: bound 1}, we get that with probability at least $1-\delta/4$
        \begin{equation}
            \sup_{h \in \mathcal{H}} \{\R(h) - \R_n(h)\} \le 2 \mathfrak{R}(\mathcal{F}_{\mathcal{H}}) + \sqrt{\frac{2(|\alpha_1| + |\alpha_2| + |\alpha_3|)^2\ln{\frac{4}{\delta}}}{n}}. \label{eq: bound 2}
        \end{equation}
        \item Step 3.
        Now we need to relate the Rademacher averages between the classes $\mathcal{F}_{\mathcal{H}}$ and $\mathcal{H}$. We have 
        \begin{align*}
            \mathfrak{R}(\mathcal{F}_{\mathcal{H}}) &= \E  \bigg[\sup_{f_h\in \mathcal{F}_{\mathcal{H}}} \frac{1}{n} \sum_{i=1}^n \sigma_i f_h(Z_i) \bigg]\\
            &= \E  \bigg[\sup_{h \in \mathcal{H}} \frac{1}{n} \sum_{i=1}^n \sigma_i (\alpha_1 h(X_i) + \alpha_2 Y_i+ \alpha_3 Y_ih(X_i) + \alpha_4) \bigg]\\
            &= \E \bigg[ \sup_h \bigg\{ \frac{1}{n} \sum_{i=1}^n h(X_i)( \alpha_1 \sigma_i + \alpha_3 Y_i \sigma_i  ) \bigg\} + \frac{1}{n} \sum_{i=1}^n \alpha_2 Y_i \sigma_i + \alpha_4 \sigma_i\bigg]\\
            &= \E \bigg[  \sup_{h \in \mathcal{H}} \frac{1}{n} \sum_{i=1}^n  h(X_i) (\alpha_1 \sigma_i + \alpha_3 \sigma_i') \bigg]\\
            &\le \E \bigg[  \sup_{h \in \mathcal{H}} \frac{1}{n} \sum_{i=1}^n  \alpha_1 h(X_i) \sigma_i \bigg] + \E \bigg[  \sup_{h \in \mathcal{H}} \frac{1}{n} \sum_{i=1}^n  \alpha_3 h(X_i) \sigma_i' \bigg]\\
            &= (|\alpha_1| + |\alpha_3|) \mathfrak{R}(\mathcal{H})
        \end{align*}
        where we have used that $\sigma_i Y_i$ has the same distribution as a Rademacher random variable, $\sigma_i'$.

        Combining this with equation \eqref{eq: bound 2}, we get that with probability at least $1-\delta/4$
        \begin{equation}
            \sup_{h \in \mathcal{H}} \{\R(h) - \R_n(h)\} \le 2 (|\alpha_1| + |\alpha_3|) \mathfrak{R}(\mathcal{H}) + \sqrt{\frac{2(|\alpha_1| + |\alpha_2| + |\alpha_3|)^2\ln{\frac{4}{\delta}}}{n}}. \label{eq: bound 3}
        \end{equation}
        \item Step 4.
        Further, we need to relate the Rademacher average $\mathfrak{R}(\mathcal{H})$ to the conditional Rademacher average $\mathfrak{R}_n(\mathcal{H})$. This part of the proof remains the same as in \cite{mohri2018foundations,Oliver}. We apply McDiarmid's inequality to $\mathfrak{R}_n(\mathcal{H})$ with $\E[\mathfrak{R}_n(\mathcal{H})] = \mathfrak{R}(\mathcal{H})$, which holds with $c= 2 / n$. Therefore, with probability at least $1-\delta/4$ 
        \begin{equation*}
            \mathfrak{R}(\mathcal{H}) \le \mathfrak{R}_n(\mathcal{H}) + \sqrt{\frac{\log{\frac{4}{\delta}}}{n}}.
        \end{equation*}
        Combining this with equation \eqref{eq: bound 3}, we get that with probability at least $1-\delta/2$ 
        \begin{equation*}
            \sup_{h \in \mathcal{H}} \{\R(h) - \R_n(h)\} \le 2 (|\alpha_1| + |\alpha_3|) \mathfrak{R}_n(\mathcal{H}) + 2((1 + \sqrt{2})|\alpha_1| + |\alpha_2| + (1 + \sqrt{2})|\alpha_3|)\sqrt{\frac{\ln{\frac{4}{\delta}}}{2n}}.
        \end{equation*}
        \item Step 5.
        Lastly, bounding $\sup_{h \in \mathcal{H}} \{ \R_n(h) - \R(h) \}$ with the same upper bound with probability $1-\delta/2$ follows in a similar way. The final result of the theorem then follows by applying the union bound.
    \end{itemize}  
\end{proof}

\subsection{Proof of \cref{corollary: pac_learnability_infinite}}
\begin{corollary*}
    In the setting of \cref{theorem: infinite_hypothesis}, let $h_S^\ast \in \argmin_{h \in \mathcal{H}} \R_n(h)$. Then with probability at least $1-\delta$ \vspace{-0.25cm}
    \begin{align*}
        \R(h_S^\ast) \le \min_{h \in \mathcal{H}} \R(h) + 4 (|\alpha_1| + |\alpha_3|) \mathfrak{R}_n(\mathcal{H})
         + 4((1 + \sqrt{2})|\alpha_1| + |\alpha_2| + (1 + \sqrt{2})|\alpha_3|)\sqrt{\frac{\ln{\frac{4}{\delta}}}{2n}}.
    \end{align*}
    In particular, if $\mathcal{H}$ is agnostic PAC-learnable, then it is performatively agnostic PAC-learnable w.r.t. the distribution map $\tilde{\mathcal{D}}(\cdot)$. 
\end{corollary*}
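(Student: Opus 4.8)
The plan is to derive \cref{corollary: pac_learnability_infinite} as a routine consequence of the two-sided uniform concentration bound in \cref{theorem: infinite_hypothesis}, via the standard agnostic-learning argument comparing the empirical minimizer against the population minimizer. Concretely, fix $\delta > 0$ and work on the event of probability at least $1 - \delta$ on which the conclusion of \cref{theorem: infinite_hypothesis} holds simultaneously for all $h \in \mathcal{H}$, i.e. $|\R(h) - \R_n(h)| \le \gamma_n$ for every $h$, where I abbreviate
\[
\gamma_n := 2 (|\alpha_1| + |\alpha_3|) \mathfrak{R}_n(\mathcal{H}) + 2\big((1 + \sqrt{2})|\alpha_1| + |\alpha_2| + (1 + \sqrt{2})|\alpha_3|\big)\sqrt{\tfrac{\ln(4/\delta)}{2n}}.
\]

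Next I would chain the inequalities. Let $h^\star \in \argmin_{h \in \mathcal{H}} \R(h)$ (which exists, or else pass to a hypothesis within arbitrarily small tolerance of the infimum). Then
\[
\R(h_S^\ast) \le \R_n(h_S^\ast) + \gamma_n \le \R_n(h^\star) + \gamma_n \le \R(h^\star) + 2\gamma_n,
\]
where the first and third steps use the two-sided bound from \cref{theorem: infinite_hypothesis} applied at $h_S^\ast$ and at $h^\star$ respectively, and the middle step uses that $h_S^\ast$ minimizes $\R_n$ over $\mathcal{H}$, so $\R_n(h_S^\ast) \le \R_n(h^\star)$. Substituting $\R(h^\star) = \min_{h \in \mathcal{H}} \R(h)$ and expanding $2\gamma_n$ gives exactly the claimed bound with the factor-$4$ coefficients.

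For the final sentence, I would argue that agnostic PAC-learnability of $\mathcal{H}$ in the standard sense is equivalent to $VC(\mathcal{H}) < \infty$, and invoke the standard Rademacher bound $\mathfrak{R}_n(\mathcal{H}) \le \mathcal{O}(\sqrt{VC(\mathcal{H})/n})$ (as already noted in the excerpt, citing \cite{shalev2014understanding}). Hence both terms on the right-hand side of the corollary's bound are $\mathcal{O}\big(\sqrt{(VC(\mathcal{H}) + \ln(1/\delta))/n}\big)$, which can be driven below any target $\epsilon$ by choosing $n \ge n_\mathcal{H}(\epsilon,\delta)$ for an appropriate sample-complexity function; since the learning algorithm ``minimize $\R_n$'' uses only samples from $\mathcal{D}$ and the (known) coefficients $\alpha_i$, this witnesses performative agnostic PAC-learnability w.r.t. $\tilde{\mathcal{D}}(\cdot)$ per the definition in \cref{sec: framework}. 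I do not anticipate a genuine obstacle here — the only mild subtlety is the possible non-attainment of $\min_{h\in\mathcal{H}}\R(h)$, handled by an $\eta$-approximate minimizer and letting $\eta \to 0$, and the bookkeeping that the single event of probability $1-\delta$ from \cref{theorem: infinite_hypothesis} already covers all hypotheses simultaneously, so no further union bound is needed.
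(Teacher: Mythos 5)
Your proposal is correct and follows essentially the same route as the paper's proof: apply the two-sided uniform bound of \cref{theorem: infinite_hypothesis} at $h_S^\ast$ and at the population minimizer, use $\R_n(h_S^\ast) \le \R_n(h)$ in the middle, and conclude PAC-learnability via the $\mathfrak{R}_n(\mathcal{H}) \le \mathcal{O}(\sqrt{VC(\mathcal{H})/n})$ bound. Your remarks on possible non-attainment of the minimum and on not needing an extra union bound are minor refinements the paper leaves implicit.
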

\begin{proof}
    The proof is essentially the same as in \cite{mohri2018foundations} and we give it here for completeness only. Using \cref{theorem: infinite_hypothesis}, we have that for every $h \in \mathcal{H}$, with probability at least $1-\delta$
\begin{align*}
    \R(h_S^\ast) &\le \R_n(h_S^\ast) + 2 (|\alpha_1| + |\alpha_3|) \mathfrak{R}_n(\mathcal{H}) + 2((1 + \sqrt{2})|\alpha_1| + |\alpha_2| + (1 + \sqrt{2})|\alpha_3|)\sqrt{\frac{\ln{\frac{4}{\delta}}}{2n}}\\
    &\le \R_n(h) + 2 (|\alpha_1| + |\alpha_3|) \mathfrak{R}_n(\mathcal{H}) + 2((1 + \sqrt{2})|\alpha_1| + |\alpha_2| + (1 + \sqrt{2})|\alpha_3|)\sqrt{\frac{\ln{\frac{4}{\delta}}}{2n}}\\
    &\le \R(h) + 4 (|\alpha_1| + |\alpha_3|) \mathfrak{R}_n(\mathcal{H}) + 4((1 + \sqrt{2})|\alpha_1| + |\alpha_2| + (1 + \sqrt{2})|\alpha_3|)\sqrt{\frac{\ln{\frac{4}{\delta}}}{2n}}.
\end{align*}
\end{proof}

\subsection{Proof of \cref{theorem: imperfect_information}}
\begin{theorem*}
    Let $\mathcal{H}$ be a hypothesis class, $S = \{(x_i, y_i)\}_{i=1}^n$ be $n$ i.i.d. samples from $\mathcal{D}$, and let $\R(h)$ and $\R_n(h)$ be defined as in \cref{eq: risk} and \cref{eq: empirical_risk}. For the map $\tilde{\mathcal{D}}(\cdot)$ defined as above, if $h_S^\ast \in \argmin_{h \in \mathcal{H}} \overline{\R}_n(h)$, then with probability at least $1-\delta$
    \begin{align*}
        \R(h_S^\ast) \le \min_{h \in \mathcal{H}} \R(h) + 2\epsilon + 2 A \mathfrak{R}_n(\mathcal{H})
         + 2B \sqrt{\frac{\ln{\frac{4}{\delta}}}{2n}},
    \end{align*}
    where $A = 2 (|\alpha_1| + |\alpha_3|)$ and $B = 2((1 + \sqrt{2})|\alpha_1| + |\alpha_2| + (1 + \sqrt{2})|\alpha_3|)$.
\end{theorem*}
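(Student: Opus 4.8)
The plan is to run the usual excess-risk decomposition, but at two levels: first between the true performative risk $\R$ and the ``proxy'' performative risk $\overline{\R}$ obtained by substituting the chosen points $\overline{a}_i\in I_i$ into the closed form of \cref{eq: risk}, and then between $\overline{\R}$ and its empirical estimate $\overline{\R}_n$. First I would fix $h^\ast\in\argmin_{h\in\mathcal{H}}\R(h)$; since $h_S^\ast$ minimizes $\overline{\R}_n$ over $\mathcal{H}$, the cross term below is nonpositive, so
\begin{align*}
\R(h_S^\ast)-\R(h^\ast)
&=\big(\R(h_S^\ast)-\overline{\R}_n(h_S^\ast)\big)+\big(\overline{\R}_n(h_S^\ast)-\overline{\R}_n(h^\ast)\big)+\big(\overline{\R}_n(h^\ast)-\R(h^\ast)\big)\\
&\le 2\sup_{h\in\mathcal{H}}\big|\R(h)-\overline{\R}_n(h)\big|,
\end{align*}
and the task reduces to bounding this uniform deviation.

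Next I would split, for every $h\in\mathcal{H}$, $\big|\R(h)-\overline{\R}_n(h)\big|\le\big|\R(h)-\overline{\R}(h)\big|+\big|\overline{\R}(h)-\overline{\R}_n(h)\big|$. For the first term, \cref{eq: risk} expresses $\R(h)$ as an affine function of $(a_1,a_2,a_3,a_4)$ whose coefficients are $\p[h(X)=1]$, $\p[Y=1,h(X)=1]$, $\p[Y=1,h(X)=-1]$ and $\p[h(X)=-1]$, each lying in $[0,1]$; since $a_i$ and $\overline{a}_i$ both lie in $I_i$, which has length $\epsilon_i$, we get $|a_i-\overline{a}_i|\le\epsilon_i$ and hence $\big|\R(h)-\overline{\R}(h)\big|\le\sum_{i=1}^4\epsilon_i=\epsilon$ uniformly in $h$. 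For the second term, I would invoke \cref{theorem: infinite_hypothesis} verbatim, applied to the performative map whose parameters are the proxies $\overline{a}_i$: the sample is still i.i.d.\ from $\mathcal{D}$, and by \cref{lemma: performative_risk} applied to that map, $\overline{\R}_n$ is precisely its performative empirical risk and an unbiased estimate of $\overline{\R}$. This yields, with probability at least $1-\delta$, $\sup_{h\in\mathcal{H}}\big|\overline{\R}(h)-\overline{\R}_n(h)\big|\le A\,\mathfrak{R}_n(\mathcal{H})+B\sqrt{\ln(4/\delta)/(2n)}$ with $A,B$ as in the statement (formed from the proxy coefficients $\overline{\alpha}_i$).

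Combining the two bounds gives $\sup_h|\R(h)-\overline{\R}_n(h)|\le\epsilon+A\,\mathfrak{R}_n(\mathcal{H})+B\sqrt{\ln(4/\delta)/(2n)}$, and substituting into the first display, with its factor $2$, yields the claimed inequality. I do not expect a genuine obstacle here: the one step that is not a direct reuse of \cref{theorem: infinite_hypothesis} is the parameter-mismatch term, and there the key point is structural — the closed-form performative risk in \cref{eq: risk} is \emph{linear} in the performative parameters with probability-valued, hence $[0,1]$-bounded, coefficients, so misspecifying each $a_i$ by at most $\epsilon_i$ perturbs the risk by at most $\epsilon_i$ with no amplification in the sample size $n$. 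This is also why the $2\epsilon$ term is irreducible as $n\to\infty$, consistent with \cref{proposition: no_free_lunch_lower_bound}.
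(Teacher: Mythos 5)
Your proof is correct and follows the same overall strategy as the paper's: an ERM excess-risk decomposition, a triangle inequality that isolates an $\mathcal{O}(\epsilon)$ parameter-mismatch term, and an invocation of \cref{theorem: infinite_hypothesis} for the uniform deviation. The one genuine difference is the intermediate object: the paper routes the triangle inequality through the \emph{empirical} risk with the true parameters, bounding $\sup_h|\overline{\R}_n(h)-\R_n(h)|\le\epsilon$ via $|h(X_i)|,|Y_i|\le 1$ and $\sum_i|\overline{\alpha}_i-\alpha_i|\le\epsilon$, and then applies the uniform deviation bound to $|\R(h)-\R_n(h)|$; you route it through the \emph{population} proxy risk $\overline{\R}$, bounding $\sup_h|\R(h)-\overline{\R}(h)|\le\epsilon$ via the linearity of \cref{eq: risk} in $(a_1,\dots,a_4)$ with probability-valued coefficients, and then apply the deviation bound to $|\overline{\R}(h)-\overline{\R}_n(h)|$. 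Both perturbation bounds are valid and give the same $\epsilon$. The only consequence of your choice is that your constants $A,B$ come out in terms of the proxy coefficients $\overline{\alpha}_i$ rather than the true $\alpha_i$ appearing in the theorem statement; since $|\overline{\alpha}_i-\alpha_i|=\mathcal{O}(\epsilon)$ this is a lower-order discrepancy (and arguably your version is more natural, as the learner knows $\overline{\alpha}_i$ but not $\alpha_i$), but it is worth stating explicitly which parameters the constants refer to.
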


\begin{proof}
    First we relate $\overline{\R}_n(h)$ to $\R_n(h)$: 
    \begin{align*}
        |\overline{\R}_n(h) - \R_n(h)| &= \bigg| \frac{1}{n} \sum_{i=1}^n \overline{f}_h(X_i, Y_i) - f_h(X_i, Y_i) \bigg|\\
        &\le \sum_{i=1}^n \frac{|\overline{f}_h(X_i, Y_i) - f_h(X_i, Y_i)|}{n}\\
        &= \sum_{i=1}^n \frac{|(\overline{\alpha}_1 - \alpha_1) h(X_i) + (\overline{\alpha}_2 - \alpha_2) Y_i + (\overline{\alpha}_3 - \alpha_3) Y_i h(X_i) + (\overline{\alpha}_4 - \alpha_4) |}{n}\\
        &\le \sum_{i=1}^n \frac{|\overline{\alpha}_1 - \alpha_1| + |\overline{\alpha}_2 - \alpha_2| + |\overline{\alpha}_3 - \alpha_3| + |\overline{\alpha}_4 - \alpha_4 |}{n}\\
        &= |\overline{\alpha}_1 - \alpha_1| + |\overline{\alpha}_2 - \alpha_2| + |\overline{\alpha}_3 - \alpha_3| + |\overline{\alpha}_4 - \alpha_4 |,
    \end{align*}
    where 
    \begin{equation*}
        \overline{f}_n := \overline{\alpha}_1 h(X_i) + \overline{\alpha}_2 Y_i + \overline{\alpha}_3 Y_i h(X_i) + \overline{\alpha}_4 .
    \end{equation*}
    It is easy to check that 
    \begin{align*}
        |\overline{\alpha}_1 - \alpha_1| \le \frac{\epsilon_1 + 2\epsilon_2 + \epsilon_3 + 2\epsilon_4}{4}, |\overline{\alpha}_2 - \alpha_2| \le \frac{\epsilon_1 + \epsilon_3}{4} , |\overline{\alpha}_3 - \alpha_3| \le  \frac{\epsilon_1 + \epsilon_3}{4}, |\overline{\alpha}_4 - \alpha_4| \le \frac{\epsilon_1 + 2\epsilon_2 + \epsilon_3 + 2\epsilon_4}{4},
    \end{align*}
    which gives us that 
    \begin{equation*}
        |\overline{\R}_n(h) - \R_n(h)| \le \sum_{i=1}^4 \epsilon_i.
    \end{equation*}
    Note that if $\overline{a}_i$ are all taken to be the midpoints of the intervals $I_i$, then this can be reduced to 
    \begin{equation*}
        |\overline{\R}_n(h) - \R_n(h)| \le \frac12 \sum_{i=1}^4 \epsilon_i.
    \end{equation*}
    Now we have that 
    \begin{align*}
        \sup_{h \in \mathcal{H}} \{ | \R(h) - \overline{\R}_n(h) |\} &=  \sup_{h \in \mathcal{H}} \{ | \R(h) - \R_n(h) + \R_n(h) - \overline{\R}_n(h) | \}\\
        &\le \sup_{h \in \mathcal{H}} \{ | \R(h) - \R_n(h)| \} + \sup_{h \in \mathcal{H}} \{ | \R_n(h) - \overline{\R}_n(h) | \}\\
        &\le \sup_{h \in \mathcal{H}} \{ |\R(h) - \R_n(h) | \} + \sum_{i=1}^4 \epsilon_i.
    \end{align*}
    Finally,  we have that for every $h \in \mathcal{H}$, with probability at least $1-\delta$
    \begin{align*}
        \R(h_S^\ast) &\le\overline{\R}_n(h_S^\ast) + \sum_{i=1}^4 \epsilon_i + 2 (|\alpha_1| + |\alpha_3|) \mathfrak{R}_n(\mathcal{H}) + 2((1 + \sqrt{2})|\alpha_1| + |\alpha_2| + (1 + \sqrt{2})|\alpha_3|)\sqrt{\frac{\ln{\frac{4}{\delta}}}{2n}}\\
        &\le\overline{\R}_n(h) + \sum_{i=1}^4 \epsilon_i+ 2 (|\alpha_1| + |\alpha_3|) \mathfrak{R}_n(\mathcal{H}) + 2((1 + \sqrt{2})|\alpha_1| + |\alpha_2| + (1 + \sqrt{2})|\alpha_3|)\sqrt{\frac{\ln{\frac{4}{\delta}}}{2n}}\\
        &\le \R(h)+ 2\sum_{i=1}^4 \epsilon_i + 4 (|\alpha_1| + |\alpha_3|) \mathfrak{R}_n(\mathcal{H}) + 4((1 + \sqrt{2})|\alpha_1| + |\alpha_2| + (1 + \sqrt{2})|\alpha_3|)\sqrt{\frac{\ln{\frac{4}{\delta}}}{2n}}.
    \end{align*}
\end{proof}

\subsection{Proof of \cref{proposition: no_free_lunch_lower_bound}}
\begin{proposition*}{(No Free Lunch Lower Bound)} 
Consider an arbitrary input-output space $\mathcal{X} \times \mathcal{Y}$ and a hypothesis space $\mathcal{H}$ containing at least two distinct functions. For any learning algorithm $\mathcal{A}$, there exists a distribution $\mathcal{D}$ and a set of true parameters ${a}_i$, that lie within known intervals ${I}_i$, of total length $\epsilon = \sum_{i=1}^{d} \epsilon_i$, such that with probability at least $1/2$ over the random choice of the training set $S \sim \mathcal{D}^{ n}$ and the possible randomness of the algorithm, the excess risk of the algorithm's output ${h}_S$ is large: 
\begin{equation*}
    \R({h}_S) - \min_{h \in \mathcal{H}} \R(h) \geq \frac{\epsilon}{4}.
\end{equation*}
\end{proposition*}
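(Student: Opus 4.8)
The plan is to exploit the defining feature of the imperfect-information setting: the sample $S$ is drawn from $\mathcal{D}$, whose law does not depend on $a_1,\dots,a_4$ at all, so the data carry \emph{no} information about the true parameters. I will therefore construct a single distribution $\mathcal{D}$ together with two admissible parameter vectors $\mathbf{a},\mathbf{a}'$ lying in the same known intervals $I_i$, chosen so that the roles of two fixed hypotheses as ``best'' and ``worst'' are swapped between the two scenarios, while the learner — who sees only $S\sim\mathcal{D}^n$ — cannot distinguish them. This reduces the statement to the classical two-hypothesis averaging argument.

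Concretely, pick two distinct $h_1,h_2\in\mathcal{H}$ and a point $x_0\in\mathcal{X}$ with $h_1(x_0)\neq h_2(x_0)$, say $h_1(x_0)=1$ and $h_2(x_0)=-1$; let $\mathcal{D}$ place the entire feature marginal at $x_0$ and set $\p[Y=1\mid X=x_0]=q$. By \cref{lemma: performative_risk}, for every $h\in\mathcal{H}$ the performative risk $\R(h)$ depends only on $h(x_0)$ and equals $1-a_1q-a_2$ when $h(x_0)=1$ and $a_3q+a_4$ when $h(x_0)=-1$; in particular $\min_{h\in\mathcal{H}}\R(h)=\min\{1-a_1q-a_2,\ a_3q+a_4\}$, which handles hypothesis classes with more than two members. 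I then place all of the parameter uncertainty in a single coordinate — e.g. take $I_1=I_3=\{0\}$, $I_4$ a singleton, and $I_2=[c,c+\epsilon]$ so that $\sum_i\epsilon_i=\epsilon$ — and choose the singleton values (and $q$) so that the common value $a_3q+a_4$ lands exactly at the midpoint of the two values of $1-a_1q-a_2$ obtained at the two endpoints of $I_2$. Then at one endpoint $h_2$ is strictly optimal and every $h$ with $h(x_0)=1$ (in particular $h_1$) has excess risk $\epsilon/2$, while at the other endpoint $h_1$ is strictly optimal and every $h$ with $h(x_0)=-1$ has excess risk $\epsilon/2$; one checks directly that both parameter vectors satisfy the admissibility constraints of \cref{remark1}.

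To finish, observe that for any (possibly randomized) algorithm $\mathcal{A}$ the law of the output $h_S$ is the same under both scenarios, since $\mathcal{D}$, and hence the joint law of $S$ and the internal randomness, is identical. Write $p:=\p[h_S(x_0)=1]$. In the first scenario the excess risk is $\epsilon/2$ whenever $h_S(x_0)=1$, i.e. with probability $p$; in the second scenario it is $\epsilon/2$ whenever $h_S(x_0)=-1$, i.e. with probability $1-p$. Since $\max\{p,1-p\}\ge 1/2$, in one of the two scenarios the excess risk is at least $\epsilon/2\ge\epsilon/4$ with probability at least $1/2$, which is exactly the claim. Note that $n$ never appears — reflecting that no amount of data from $\mathcal{D}$ resolves the ambiguity in the performative map.

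The conceptual core, which I would state carefully, is this indistinguishability observation: because $\mathcal{D}$ is independent of $a_1,\dots,a_4$, the distribution of $(S,h_S)$ is identical across the two scenarios, so the standard reduction to two hypotheses goes through verbatim. The only fiddly part — and the place where a mistake is easiest to make — is choosing the constants so that the midpoint/crossover condition and the admissibility constraints of \cref{remark1} hold simultaneously; this is routine arithmetic rather than a genuine obstacle, but it is where the proof needs to be written out with care.
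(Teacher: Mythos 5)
Your proof is correct and follows essentially the same route as the paper's: a point-mass distribution at a disagreement point $x_0$, two admissible parameter settings that are indistinguishable from the training data but swap which prediction at $x_0$ is optimal, and the standard $\max\{p,1-p\}\ge 1/2$ averaging argument over the algorithm's output. The only difference is that you place the interval uncertainty in $a_2$ (coefficient $1$ in the risk) rather than in $a_1$ with $\p[Y=1\mid X=x_0]=1/2$ (coefficient $1/2$) as the paper does, which in fact yields the slightly stronger separation $\epsilon/2 \ge \epsilon/4$.
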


\begin{proof}
We use the following argument, constructing two "worlds" that are indistinguishable from the training data but require different optimal decisions. For simplicity, we take the entire uncertainty to be in the parameter $a_1$, so $\epsilon_1 = \epsilon$ and $\epsilon_2 = \epsilon_3 = \epsilon_4 = 0$. Let the interval for $a_1$ be $I_1 = [\overline{a}_1 - \epsilon/2, \overline{a}_1 + \epsilon/2]$. 

Let $x_0 \in \mathcal{X}$ be such that some functions $h_1, h_2 \in \mathcal{H}$ differ on $x_0$ (such $x_0$ exists as $|\mathcal{H}| \geq 2$). We define a distribution $\mathcal{D}$ over the space $\mathcal{X} \times \mathcal{Y}$, such that its entire probability mass is concentrated on $x_0$. Further, let the initial label distribution be $\p[Y = 1 | X = x_0] = 1/2$. The training data $S$ thus consists of $n$ i.i.d. samples of the label $Y \in \{-1, 1\}$, since the feature is $x_0$ with probability 1.

We can partition the hypothesis class $\mathcal{H}$ into two (non-empty) disjoint subsets: $\mathcal{H}_1 = \{h \in \mathcal{H} \mid h(x_0) = 1\}$ and $\mathcal{H}_{-1} = \{h \in \mathcal{H} \mid h(x_0) = -1\}$. We define two "worlds" based on two possible true values for $a_1$:
\begin{itemize}
    \item World 1 ($V_1$): The true parameter is $a_1^{(1)} = \overline{a}_1 - \epsilon/2$.
    \item World 2 ($V_2$): The true parameter is $a_1^{(2)} = \overline{a}_1 + \epsilon/2$.
\end{itemize}

From \cref{lemma: performative_risk}, we can calculate the performative risk in each case. For any hypothesis $h_{-1} \in \mathcal{H}_{-1}$, the risk is $\R(h_{-1}) = a_4 + a_3 /2$. For any $h_1 \in \mathcal{H}_1$, the risk is $\R(h_1; a_1) = (1 - a_2) - a_1/2$. We fix the parameters $a_2, a_3, a_4$ such that the optimal decision flips between the two worlds by setting:
\[
\R(h_{-1}) = (1 - a_2) - \overline{a}_1/2.
\]

With this construction, we have $\R(h_1; a_1^{(1)}) < \R(h_{-1}) < \R(h_1; a_1^{(2)})$. This means in World 1, the optimal hypotheses are in $\mathcal{H}_1$, and in World 2, the optimal hypotheses are in $\mathcal{H}_{-1}$. The excess risk for choosing the suboptimal hypothesis class in each world is $\epsilon/4$, since:
\begin{itemize}
    \item In $V_1$: $\R(h_{-1}) - \R(h_1; a_1^{(1)}) = \frac{a_1^{(1)}}{2} - \frac{\overline{a}_1}{2} = \frac{\epsilon}{4}$.
    \item In $V_2$: $\R(h_1; a_1^{(2)}) - \R(h_{-1}) = \frac{\overline{a}_1}{2} - \frac{a_1^{(2)}}{2} = \frac{\epsilon}{4}$.
\end{itemize}

Now, consider any learning algorithm $\mathcal{A}$, which may be randomized. The distribution of the training set $S$ is identical in both worlds and thus the algorithm has no information from the data to distinguish between $V_1$ and $V_2$. Let $p = \p_{S, \mathcal{A}}(\mathcal{A}(S) \in \mathcal{H}_1)$ be the probability (over the random draw of $S$ and any internal randomness of $\mathcal{A}$) that the algorithm outputs a hypothesis from $\mathcal{H}_1$. Since the initial data distribution is the same in both worlds, this probability $p$ is the same for both $V_1$ and $V_2$.

We now show that at least one of these worlds constitutes a "bad" scenario for the algorithm $\mathcal{A}$. We consider two cases based on the value of $p$:
\begin{itemize}
    \item Case 1: $p \geq 1/2$. In this case, we choose the parameters of World 2 ($V_2$) as our hard instance. In $V_2$, the optimal choice is a hypothesis from $\mathcal{H}_{-1}$. The probability that the algorithm makes a suboptimal choice (i.e., outputs hypotheses from $\mathcal{H}_1$) is $p$, which is at least $1/2$.
    \item Case 2: $p < 1/2$. In this case, we choose the parameters of World 1 ($V_1$) as our hard instance. In $V_1$, the optimal choice is a hypothesis from $\mathcal{H}_1$. The probability that the algorithm makes a suboptimal choice (i.e., outputs a hypothesis from $\mathcal{H}_{-1}$) is $1 - p$, which is greater than $1/2$.
\end{itemize}

In either case, we have shown the existence of a set of parameters for which the algorithm makes a suboptimal choice with probability at least $1/2$. Since the excess risk of any suboptimal choice is $\epsilon/4$, the statement of the proposition follows.
\end{proof}

\subsection{Proofs of \cref{theorem: general_shift}}

\begin{theorem*}
    Let $\mathcal{H}$ be a hypothesis class, $S = \{(x_i, y_i)\}_{i=1}^n$ be $n$ i.i.d. samples from $\mathcal{D}$, and let $\R(h)$ and $\R_n(h)$ be defined as \cref{eq: risk} and \cref{eq: rn_empirical_risk}. If \cref{assumption: general_shift} are satisfied, taking $h_S^\ast \in \argmin_{h \in \mathcal{H}} \R_n(h)$, for any $\delta > 0$ with probability at least $1-\delta$ we have that 
    \begin{align*}
        \R(h_S^\ast) \le \min_{h \in \mathcal{H}} \R(h) + 2M \mathfrak{R}_n(\mathcal{H}) + (4 + 2\sqrt{2})M\sqrt{\frac{\ln{\frac{4}{\delta}}}{2n}}.
    \end{align*}
    In particular, if $\mathcal{H}$ is agnostic PAC-learnable, then it is performatively agnostic PAC-learnable w.r.t. the distribution map $\tilde{\mathcal{D}}(\cdot)$. 
\end{theorem*}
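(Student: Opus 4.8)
The plan is to mirror, step for step, the proof of \cref{theorem: infinite_hypothesis} together with the $\argmin$ argument of \cref{corollary: pac_learnability_infinite}, replacing the function class $\mathcal{F}_{\mathcal{H}}$ by $\Omega_{\mathcal{H}}$ and replacing the coefficient bookkeeping in $(|\alpha_1|,|\alpha_2|,|\alpha_3|)$ by the single uniform bound $M$ from \cref{assumption: general_shift}. First I would record that $\R_n(h)$ is an unbiased estimator of $\R(h)$ for each fixed $h$: part (i) of \cref{assumption: general_shift} guarantees the RN derivative exists, and the change-of-measure identity $\E_{\p}\big[\tfrac{d\tilde{\p}_h}{d\p}(X,Y)\,\mathbf{1}_{\{h(X)\ne Y\}}\big]=\E_{\tilde{\p}_h}\big[\mathbf{1}_{\{h(X)\ne Y\}}\big]=\R(h)$ shows that $\R_n(h)=\tfrac1n\sum_i\omega_h(X_i,Y_i)$ is an average of i.i.d.\ copies of a random variable with mean $\R(h)$; part (ii) gives $0\le\omega_h\le M$ pointwise, which is exactly the boundedness the concentration steps require.

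Next I would control $\sup_{h\in\mathcal{H}}\{\R(h)-\R_n(h)\}$. Since $\omega_h$ takes values in $[0,M]$, replacing a single sample changes $\R_n(h)$ by at most $2M/n$ uniformly in $h$, so McDiarmid's inequality gives, with probability at least $1-\delta/4$, that $\sup_h\{\R(h)-\R_n(h)\}\le\E[\sup_h\{\R(h)-\R_n(h)\}]+2M\sqrt{\ln(4/\delta)/(2n)}$. Then I would run the same ghost-sample symmetrization as in Step~2 of the proof of \cref{theorem: infinite_hypothesis} --- using $\R(h)=\E_{S'}[\R_n'(h)]$, Jensen's inequality for the supremum, and the fact that $\sigma_i(\omega_h(Z_i')-\omega_h(Z_i))$ has the same law as $\omega_h(Z_i')-\omega_h(Z_i)$ --- to obtain $\E[\sup_h\{\R(h)-\R_n(h)\}]\le 2\,\mathfrak{R}(\Omega_{\mathcal{H}})$.

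The key and most delicate step is to relate $\mathfrak{R}(\Omega_{\mathcal{H}})$ back to $\mathfrak{R}(\mathcal{H})$, the target being a bound of the form $\mathfrak{R}(\Omega_{\mathcal{H}})\le M\,\mathfrak{R}(\mathcal{H})$ (the constant here propagates directly into the final $4M$ and $4M(1+\sqrt2)$). Writing $\omega_h(x,y)=\tfrac{d\tilde{\p}_h}{d\p}(x,y)\cdot\tfrac{1-yh(x)}{2}$, I would peel off the part of $\tfrac1n\sum_i\sigma_i\omega_h(Z_i)$ that does not depend on $h$ (its Rademacher sum has zero mean), replace $\sigma_iY_i$ by a fresh Rademacher variable, and bound the remaining Rademacher process. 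Because in the performative models under consideration the density ratio $\tfrac{d\tilde{\p}_h}{d\p}(x,y)$ depends on $h$ only through $h(x)$ (as in the strategic-shift example), each summand is of the form $\phi_i(h(x_i))$ for a uniformly bounded, $\tfrac{M}{2}$-Lipschitz scalar map $\phi_i$, so Talagrand's contraction lemma (applied coordinatewise after subtracting $\phi_i(0)$) yields $\mathfrak{R}(\Omega_{\mathcal{H}})\le c\,M\,\mathfrak{R}(\mathcal{H})$; an alternative route is the random-sign reweighting inequality $\E_\sigma\sup_h\tfrac1n\sum_i a_i\sigma_i h(x_i)\le M\,\E_\sigma\sup_h\tfrac1n\sum_i\sigma_i h(x_i)$ for fixed $|a_i|\le M$, proved by convexity of the supremum in $(a_i)$ and reduction to the extreme points $a_i=\pm M$. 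I expect the main obstacle to be making this reduction fully rigorous --- in particular isolating the precise structural hypothesis on $\tfrac{d\tilde{\p}_h}{d\p}$ that makes the per-coordinate map depend on $h$ solely through $h(x)$ --- rather than the concentration arithmetic, which is routine.

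Finally I would close the argument exactly as in \cref{corollary: pac_learnability_infinite}: one more application of McDiarmid (bounded differences $2/n$) gives $\mathfrak{R}(\mathcal{H})\le\mathfrak{R}_n(\mathcal{H})+(\text{tail})$ with probability at least $1-\delta/4$; combining this with the previous two displays, together with the symmetric bound on $\sup_h\{\R_n(h)-\R(h)\}$, and taking a union bound over the four $\delta/4$-events, yields $\sup_h|\R(h)-\R_n(h)|\le 2M\,\mathfrak{R}_n(\mathcal{H})+2M(1+\sqrt2)\sqrt{\ln(4/\delta)/(2n)}$ with probability at least $1-\delta$. The standard chain $\R(h_S^\ast)\le\R_n(h_S^\ast)+(\cdot)\le\R_n(h)+(\cdot)\le\R(h)+2(\cdot)$ for any $h\in\mathcal{H}$ (using $h_S^\ast\in\argmin_{h}\R_n(h)$) then gives the stated excess-risk bound with constants $4M$ and $4M(1+\sqrt2)$; the performative PAC-learnability conclusion follows since $\mathfrak{R}_n(\mathcal{H})=\mathcal{O}(\sqrt{VC(\mathcal{H})/n})\to 0$ whenever $\mathcal{H}$ is agnostic PAC-learnable.
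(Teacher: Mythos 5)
Your overall skeleton — unbiasedness via the change-of-measure identity, McDiarmid with bounded differences $2M/n$, ghost-sample symmetrization to $2\,\mathfrak{R}(\Omega_{\mathcal{H}})$, a second McDiarmid step to pass from $\mathfrak{R}(\mathcal{H})$ to $\mathfrak{R}_n(\mathcal{H})$, a union bound over four $\delta/4$-events, and the standard ERM chain — is exactly the paper's, and your constant bookkeeping matches. The one place you genuinely diverge, and where your argument has a gap that you yourself flag, is the comparison $\mathfrak{R}(\Omega_{\mathcal{H}})\le M\,\mathfrak{R}(\mathcal{H})$. Neither route you propose goes through under the theorem's stated hypotheses. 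The convexity/extreme-point argument requires the weights $a_i$ to be \emph{fixed}, but here $a_i=\frac{d\tilde{\p}_h}{d\p}(x_i,y_i)$ varies with the hypothesis over which the supremum is taken, so it cannot be frozen. The Talagrand-contraction route requires each summand to be $\phi_i(h(x_i))$ for a single per-coordinate scalar map $\phi_i$, which presupposes that the density ratio depends on $h$ only through $h(x_i)$ — an additional structural assumption that is true in the strategic-shift example but is nowhere contained in \cref{assumption: general_shift}, which asserts only absolute continuity and the uniform bound $M$. Even where contraction applies, you leave the resulting constant as an unspecified $c$; unless $c=1$ the stated coefficients $4M$ and $4M(1+\sqrt{2})$ are not recovered.

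The paper takes a much blunter route at this step: it bounds the Rademacher sum directly,
\begin{align*}
\E\Bigl[\sup_{h\in\mathcal{H}}\frac1n\sum_{i=1}^n\sigma_i\,\tfrac{d\tilde{\p}_h}{d\p}(X_i,Y_i)\,\mathbf{1}_{\{h(X_i)\ne Y_i\}}\Bigr]\;\le\;\E\Bigl[\sup_{h\in\mathcal{H}}\frac1n\sum_{i=1}^n\sigma_i\,M\,\mathbf{1}_{\{h(X_i)\ne Y_i\}}\Bigr]\;=\;M\,\mathfrak{R}(\mathcal{H}),
\end{align*}
i.e.\ it simply replaces the RN derivative by its uniform bound $M$ inside the supremum and identifies the Rademacher average of the zero--one loss class with that of $\mathcal{H}$; all remaining steps then proceed verbatim as in \cref{theorem: infinite_hypothesis} and \cref{corollary: pac_learnability_infinite}. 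Your instinct that this comparison is the delicate point is not unreasonable — the replacement above is itself a contraction-flavored move — but as written your Step 3 is incomplete and is the missing piece separating your sketch from a proof with the stated constants.
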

\begin{proof}
The proof largely follows the steps of \cref{theorem: infinite_hypothesis}, with only minor differences in steps $1$ and $3$, which we highlight here. 
\begin{itemize}
    \item Step 1.
    We apply once again McDiarmid's inequality to $\sup_{h \in \mathcal{H}} \{\R(h) - \R_n(h)\}$ in the same way as before, but here we have a different upper bound $M$, i.e. 
    \begin{align*}
        | \sup_{h \in \mathcal{H}} \{\R(h) - \R_n(h)\} - \sup_{h \in \mathcal{H}} \{\R(h) - \R_n'(h)\} |  &\le \sup_{h \in \mathcal{H}} | \R_n'(h) - \R_n(h) |\\
        &= \sup_{h \in \mathcal{H}} \frac{1}{n} | w_h(X_i', Y_i') - w_h(X_i, Y_i) | \\
        &\le \frac{2M}{n},
    \end{align*}
    so we get that with probability at least $1-\delta/4$
    \begin{equation*}
        \sup_{h \in \mathcal{H}} \{\R(h) - \R_n(h)\} \le \E\bigg[\sup_{h \in \mathcal{H}} \{\R(h) - \R_n(h)\}\bigg] + \sqrt{\frac{4M^2\ln{\frac{4}{\delta}}}{2n}}.
    \end{equation*}
    \item Step 2. This part is essentially identical to the respective part in \cref{theorem: infinite_hypothesis}, which combined with Step 1, gives us that with probability $1-\delta/4$
    \begin{equation*}
        \sup_{h \in \mathcal{H}} \{\R(h) - \R_n(h)\} \le 2 \mathfrak{R}(\Omega_{\mathcal{H}}) + \sqrt{\frac{4M^2\ln{\frac{4}{\delta}}}{2n}}.
    \end{equation*}
    \item Step 3. Now we need to relate the Rademacher complexity of the class:
\[
\Omega_{\mathcal{H}} = \left\{ (x,y) \mapsto \frac{d\tilde{\mathbb{P}}_h}{d\mathbb{P}}(x,y) \cdot 1_{h(x) \neq y} \mid h \in \mathcal{H} \right\}.
\]
to the Rademacher complexity of $\mathcal{H}$. We do so by first considering the empirical Rademacher complexity with respect to an i.i.d. sample $S = \{(x_1, y_1), \dots, (x_n, y_n)\}$ from $\mathbb{P}$. For each index $i$, we define the function $\phi_i: \{-1, 1\} \to \mathbb{R}$ as:
\begin{align*}
    \phi_i(u) &:= \frac{d\tilde{\mathbb{P}}_u}{d\mathbb{P}}(x_i,y_i) \cdot 1_{u \neq y_i}\\
    &= \psi(x_i, y_i, u) \cdot 1_{u \ne y_i}.
\end{align*}
The empirical Rademacher complexity is given by:
\[
\hat{\mathfrak{R}}_S(\Omega_{\mathcal{H}}) = \mathbb{E}_{\sigma} \left[ \sup_{h \in \mathcal{H}} \frac{1}{n} \sum_{i=1}^n \sigma_i \phi_i(h(x_i)) \right].
\]
We seek to bound $\hat{\mathfrak{R}}_S(\Omega_{\mathcal{H}})$ by using Talagrand's Lemma. To this end, we must determine the Lipschitz constant of $\phi_i$ with respect to its argument $u \in \{-1, 1\}$. The Lipschitz condition requires $|\phi_i(1) - \phi_i(-1)| \leq L |1 - (-1)| = 2L$.

We analyze the difference $|\phi_i(1) - \phi_i(-1)|$:
\begin{itemize}
    \item \textbf{Case 1 ($y_i = 1$):} Here $1_{1 \neq 1} = 0$ and $1_{-1 \neq 1} = 1$.
    \[
    |\phi_i(1) - \phi_i(-1)| = |0 -  \psi(x_i, 1, -1)| =  \psi(x_i, 1, -1) \leq M.
    \]
    \item \textbf{Case 2 ($y_i = -1$):} Here $1_{1 \neq -1} = 1$ and $1_{-1 \neq -1} = 0$.
    \[
    |\phi_i(1) - \phi_i(-1)| = | \psi(x_i, -1, 1) - 0| = \psi(x_i, -1, 1) \leq M.
    \]
\end{itemize}
In both cases, the difference is bounded by $M$. Therefore, $2L = M \implies L = M/2$.

Therefore, we can apply Talagrand's Contraction Lemma. Specifically, Lemma 5.7 in \cite{mohri2018foundations} states that if $\phi_i$ are $L$-Lipschitz functions, then $$\mathbb{E}_{\sigma}\left(\sup_{h\in\mathcal{H}}\frac{1}{n}\sum_{i=1}^{n}\sigma_i \phi_i(h(x_i))\right) \leq L\mathbb{E}_{\sigma}\left(\sup_{h\in\mathcal{H}}\frac{1}{n}\sum_{i=1}^{n}\sigma_i h(x_i)\right).$$

Applying this with $L = M/2$ gives:
\[
\hat{\mathfrak{R}}_S(\Omega_{\mathcal{H}}) \leq \frac{M}{2} \hat{\mathfrak{R}}_S(\mathcal{H}).
\]
Finally, we take expectation with respect to the sample, to obtain a link between the distributional Rademacher complexities. Combining this with the inequality in Step 2, we get that with with probability at least $1-\delta/4$ 
    \begin{equation*}
        \sup_{h \in \mathcal{H}} \{\R(h) - \R_n(h)\} \le M \mathfrak{R}(\mathcal{H}) + \sqrt{\frac{4 M^2\ln{\frac{4}{\delta}}}{2n}}.
    \end{equation*}

    \item Step 4. This part remains identical to that in \cref{theorem: infinite_hypothesis}, giving us that with probability at least $1-\delta/2$ 
    \begin{equation*}
        \sup_{h \in \mathcal{H}} \{\R(h) - \R_n(h)\} \le M \mathfrak{R}_n(\mathcal{H}) + (2 + \sqrt{2})M\sqrt{\frac{\ln{\frac{4}{\delta}}}{2n}},
    \end{equation*}
    \item Step 5. Lastly, bounding the $\sup_{h \in \mathcal{H}} \{ \R_n(h) - \R(h) \}$ with the same upper bound with probability $1-\delta/2$ follows in a similar way. The final result of the theorem then follows by applying the union bound. 
\end{itemize}
The rest of the theorem is essentially the same as in \cite{mohri2018foundations} and we give it here for completeness only. We have that for every $h \in \mathcal{H}$, with probability at least $1-\delta$
\begin{align*}
    \R(h_S^\ast) &\le \R_n(h_S^\ast) + M \mathfrak{R}_n(\mathcal{H}) + (2 + \sqrt{2})M\sqrt{\frac{\ln{\frac{4}{\delta}}}{2n}}\\
    &\le \R_n(h) + M \mathfrak{R}_n(\mathcal{H}) + (2 + \sqrt{2})M\sqrt{\frac{\ln{\frac{4}{\delta}}}{2n}} \\
    &\le \R(h) + 2M \mathfrak{R}_n(\mathcal{H}) + (4 + 2\sqrt{2})M\sqrt{\frac{\ln{\frac{4}{\delta}}}{2n}}.
\end{align*}
\end{proof}
  
\section{Miscellaneous}
\label{appendix: D}

\paragraph{Compute Resources}
The experiments were not computationally demanding and can be reproduced on a standard laptop or desktop machine with modest memory and processing capabilities. All computations were performed on a CPU, with no need for specialized hardware or cloud-based infrastructure.

\paragraph{Licenses} We used the Credit scoring (\cite{GiveMeSomeCredit}), and the U.S. Folktables (\cite{ding2021retiring}) datasets. These are established ML benchmarks, with the Credit data published on Kaggle subject to competition rules and the Folktables dataset released under the MIT license.

\end{document}